\documentclass[lettersize,journal,compsoc]{IEEEtran}
\usepackage{amsmath,amsfonts}
\usepackage{algorithmic}
\usepackage{array}
\usepackage[caption=false,font=normalsize,labelfont=sf,textfont=sf]{subfig}
\usepackage{textcomp}
\usepackage{stfloats}
\usepackage{url}
\usepackage{verbatim}
\usepackage{graphicx}

\usepackage{amssymb}
\usepackage{mathtools}
\usepackage{amsthm}

\usepackage{booktabs} %
\usepackage{bm}
\usepackage{multirow}
\usepackage{hyperref}

\theoremstyle{plain}
\newtheorem{theorem}{Theorem}[section]

\newtheorem{lemma}[theorem]{Lemma}
\newtheorem{corollary}[theorem]{Corollary}
\theoremstyle{definition}
\newtheorem{definition}[theorem]{Definition}

\theoremstyle{remark}

\newtheorem*{lemma*}{Lemma}  %
\newtheorem*{theorem*}{Theorem}

\def\BibTeX{{\rm B\kern-.05em{\sc i\kern-.025em b}\kern-.08em
    T\kern-.1667em\lower.7ex\hbox{E}\kern-.125emX}}
\usepackage{balance}
\begin{document}
\title{Stability-Shaped Deep Graph Learning}
\author{Junyou~Zhu,
        Langzhou~He,
        Fenying~Cai,
        Christian~Nauck,
        Ping~Xiong,
        Chao~Gao,
        Philip~S.~Yu,~\IEEEmembership{Life~Fellow,~IEEE},\\
        Klaus-Robert~Müller,
        Jürgen~Kurths,
        Frank~Hellmann

\thanks{Junyou~Zhu is with the Department of Complexity Science, Potsdam Institute for Climate Impact Research, 14473 Potsdam, Germany, and also with the Machine Learning Group, Technical University of Berlin, 10587 Berlin, Germany. Email: \texttt{Junyou.Zhu@pik-potsdam.de}}%

\thanks{Langzhou~He and Philip~S.~Yu are with the Department of Computer Science, University of Illinois at Chicago, Chicago, IL 60607, USA. Emails: \texttt{lhe24@uic.edu}, \texttt{psyu@uic.edu}}
\thanks{Fenying~Cai, Christian~Nauck, and Frank~Hellmann are with the Department of Complexity Science, Potsdam Institute for Climate Impact Research, 14473 Potsdam, Germany. Emails: \texttt{fenyingc@pik-potsdam.de}, \texttt{nauck@pik-potsdam.de}, \texttt{hellmann@pik-potsdam.de}}%
\thanks{Ping~Xiong is with the Berlin Institute for the Foundations of Learning and Data (BIFOLD), 10587 Berlin, Germany, also with the Machine Learning Group, Technical University of Berlin, 10587 Berlin, Germany. Email: \texttt{p.xiong@tu-berlin.de}}
\thanks{Chao Gao is with the School of Artificial Intelligence, Optics and Electronics (iOPEN), Northwestern Polytechnical University, Xi'an 710072, China. Email: \texttt{cgao@nwpu.edu.cn}}
\thanks{Jürgen Kurths is with the Research Institute of Intelligent Complex Systems, Fudan University, Shanghai 200433, China, also with the Department of Complexity Science, Potsdam Institute for Climate Impact Research, 14473 Potsdam, Germany, and also with the Department of Physics, Humboldt University Berlin, 12489 Berlin, Germany. Email: \texttt{kurths@pik-potsdam.de}.}
\thanks{Klaus-Robert Müller is with the Machine Learning Group, Technical University of Berlin, 10587 Berlin, Germany, also with the Berlin Institute for the Foundations of Learning and Data (BIFOLD), 10587 Berlin, Germany, also with the Department of Artificial Intelligence, Korea University, Seoul 136-713, South Korea, and also with the Max Planck Institute for Informatics, 66123 Saarbrücken, Germany. Email: \texttt{klaus-robert.mueller@tu-berlin.de}}
}

\maketitle

\begin{abstract}
In deep graph neural networks, increasing depth enlarges the receptive field but often leads to over-smoothing, where node representations tend to align. We develop a unified, mode-wise stability framework for deep GNN propagation that provides a principled characterization of over-smoothing.
By interpreting layer depth as time and layer updates as graph-coupled dynamics, over-smoothing can be understood as an undesirable dynamical synchronization of features, for which the master stability curve provides a theoretical tool to assess the stability of synchrony. Guided by this theory, we further propose \textbf{S}tability-\textbf{S}haped \textbf{D}eep \textbf{G}raph \textbf{L}earning (SDGL) to mitigate over-smoothing in deep GNNs. SDGL has two complementary instantiations: one induces controlled Turing instability to replace synchronization with spatial pattern formation, and the other maintains stable near-critical propagation. Experiments on diverse node- and graph-level benchmarks demonstrate the improved depth scaling and consistent accuracy gains over strong baselines, including graphs exhibiting long-range dependencies.
\end{abstract}

\begin{IEEEkeywords}
Graph neural networks, Deep graph learning, Over-smoothing, Spectral analysis, Stability analysis.
\end{IEEEkeywords}

\section{Introduction}
Increasing depth expands the receptive field of graph neural networks (GNNs) and is a natural route to long-range reasoning~\cite{chen2023bag,eliasoftango,chen2023bagtricks}.
Yet in many popular designs, deeper propagation makes node representations increasingly alike and can erase node-level distinctions~\cite{nguyen2023revisiting,sun2026asil,yuehyperbolic,yang2026deeper}.
This phenomenon is commonly referred to as \emph{over-smoothing}, where after many layers embeddings become nearly constant across nodes and the model loses discriminative power~\cite{thorpe2022grand11,attali2024delaunay,nauck2024dirac,park2024mitigating,li2023deepergcn}.
\begin{figure}[t] 
\vskip -0.0in
\begin{center}
\includegraphics[width=1.0\linewidth]{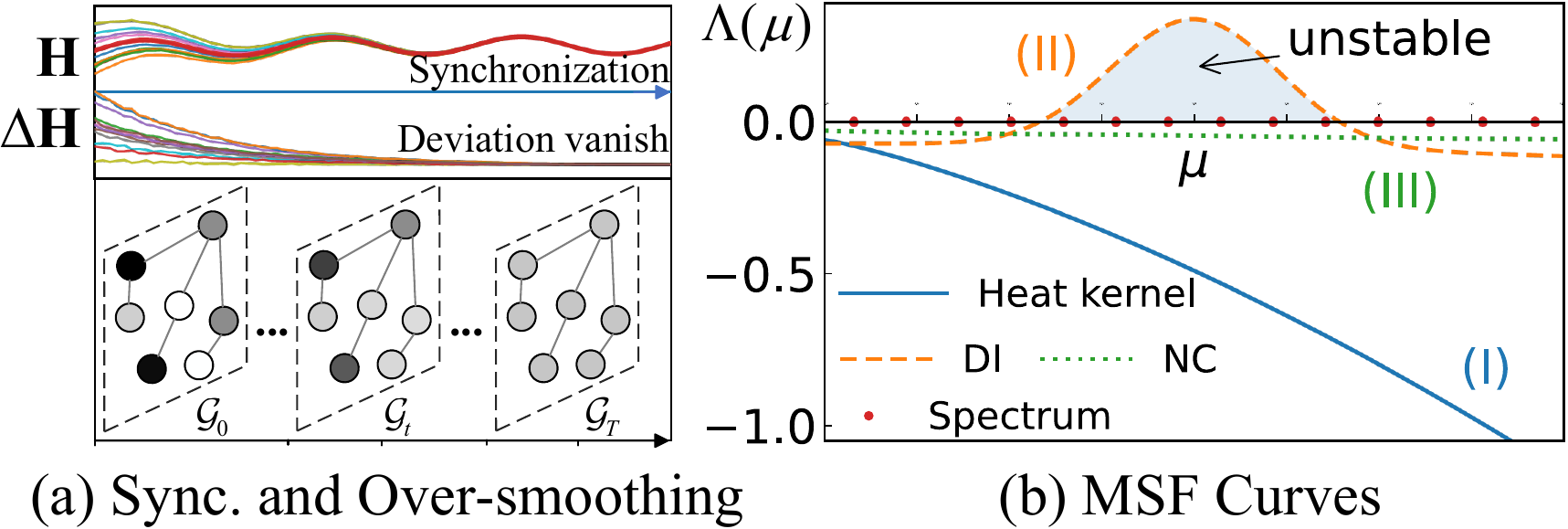}
\caption{
\textbf{(a)} Over-smoothing as synchronization: node representations $\mathbf H$ collapse toward a shared state and deviations from synchrony shrink in the middle panel.
\textbf{(b)} Our master-stability view: the curve $\Lambda(\mu)$ shows how each graph frequency $\mu$ is damped or amplified.
Heat-kernel diffusion is strongly contractive at high frequencies (I), our SDGL-DI opens a band-limited amplification window (II), and SDGL-NC remains stable but near-critical (III).
}

\label{fig_intro_syn_overs}
\end{center}
\vskip -0.0in
\end{figure}

For linear message passing, the mechanism behind over-smoothing is explicit. There, the propagation behaves like heat diffusion on a graph~\cite{zhao2025understanding,maskey2023fractional}, which progressively removes high-frequency variation and drives the signal toward an almost-constant configuration. From a dynamical-systems perspective, this almost-constant configuration corresponds to a \emph{synchronized} state in which all nodes share approximately the same representation. This perspective motivates us to investigate over-smoothing by analyzing whether small node-to-node differences shrink under repeated propagation.

A growing line of work views deep GNN propagation through the lens of dynamical systems~\cite{rusch2022graph,chamberlain2021grand,nguyen2024coupled,eliasof2024graph,ding2025let}.
Under this view, stacking layers corresponds to iterating a graph-coupled update, where node-wise nonlinearities interact with message passing over the graph~\cite{chamberlain2021grand}.
This perspective provides a clear picture of depth-induced information loss by offering a principled lens to track how node representations evolve under repeated propagation.
However, existing analyses are often tied to a particular architecture or discretization~\cite{rusch2022graph,chamberlain2021grand,eliasof2024graph}, making it hard to transfer conclusions across models and graphs. 

Motivated by this, we study over-smoothing as a stability phenomenon around the synchronized state.
The key point is that over-smoothing is not about the absolute value of embeddings, but about the \emph{decay of non-synchronous deviations} (node-to-node differences) under repeated propagation.
If the synchronized state is strongly attracting, any non-synchronous component is rapidly suppressed, leading to collapse (over-smoothing).
If it is weakly attracting, useful variation can persist at large depth. If it is repelling, deviations will grow exponentially, either without bound or until they are stabilized by nonlinear behavior far from the synchronous state.
Fig.~\ref{fig_intro_syn_overs}(a) illustrates this intuition.
This reframes the depth problem as a stability question: \textit{how fast do deviations from synchrony contract under repeated propagation, and how does this contraction rate determine the practically usable depth before informative variation is lost?}

The key challenge is to understand which graph-frequency components of node-to-node differences, equivalently which spectral modes of deviations from synchrony, decay fastest as depth increases.
For linear message passing, this analysis is straightforward.
The propagation operator has an eigenbasis, so any signal can be decomposed into independent spectral modes, and each mode decays at its own rate~\cite{keriven2022not}.
For nonlinear graph-coupled dynamics, however, the situation becomes more subtle.
The node-wise nonlinear update mixes feature channels and interacts with the graph coupling, so the spectral components of the differences are no longer obviously independent. Consequently, it becomes unclear which parts will be suppressed or amplified by deep propagation.

To address this, we develop a decoupled, spectral mode-wise stability theoretical framework inspired by the master stability formalism from dynamical systems~\cite{pecora1998master}.
The core idea is to linearize the dynamics around the synchronized state and express deviations in the eigenbasis of the graph operator.
This reduces the full coupled nonlinear system to a collection of linear systems parameterized by the graph eigenvalue.
As a result, each graph mode is associated with its own stability margin. Treating the graph eigenvalue as a free parameter, the stability margin provides a single stability curve that is independent of the specific graph.
This curve separates the contribution of graph topology from that of the local nonlinear update, and it predicts whether non-synchronous structure at a given frequency will be rapidly damped, selectively amplified, or only weakly contracted (Fig.~\ref{fig_intro_syn_overs}(b)).
Importantly, the curve directly suggests how to design architectures that either create a band-limited amplification window for pattern formation, or keep all non-synchronous modes stable but near-critical, so that useful variation persists at large depth.

\textbf{Our contributions can be summarized as follows:}
\begin{itemize} 
\item To the best of our knowledge, we develop the first unified master stability-inspired theoretical framework for nonlinear dynamical system-based GNNs that casts over-smoothing as a mode-wise stability problem in deep propagation.
\item Building on this theory, we propose two deep architectures that mitigate over-smoothing in complementary ways:
one selectively amplifies a band of graph frequencies, while the other keeps non-synchronous components stable and weakly damped for deep propagation. 
\item We validate our approach on homophilic and heterophilic node classification and on graph classification with long-range dependencies, and observe consistent gains from stability-shaped deep propagation.
\end{itemize}

\section{Related Work}

\subsection{Deep GNNs and depth-related degradation.}
While graph neural networks excel at learning representations by fusing local structure and node features~\cite{besta2026demystifying,shen2025torsion,zhu2022community, liu2026dictionary, zhu2025network}, standard shallow message passing struggles to capture long-range dependencies due to its restricted receptive field. Scaling to deeper architectures naturally addresses this limitation by enabling information flow across distant nodes~\cite{yang2026deeper, liu2025deep}. However, this depth introduces a severe trade-off: repeated neighborhood aggregation progressively dilutes node-specific variations, leading to over-smoothing and a critical loss of discriminative power.

To counter this depth-induced degradation, significant effort has been invested in optimizing the training and architectural design of deep GNNs. For instance, extensive empirical benchmarks reveal that carefully orchestrated training techniques can partially recover the performance of deep message-passing networks~\cite{chen2023bagtricks}. Other architectural interventions, such as the learnable generalized aggregation in \emph{DeeperGCN}~\cite{li2023deepergcn} or decoupled greedy training frameworks~\cite{zhang2024decouple}, explicitly aim to mitigate optimization hurdles and improve depth scalability. Yet, these practical advances collectively point to a more fundamental issue: the bottleneck in deep graph learning extends beyond empirical optimization. It is intrinsically tied to the long-horizon asymptotic behavior of the propagation operator itself. This realization directly motivates the need for theoretical tools capable of explicitly characterizing how information evolves and persists at large depths.

\subsection{Spectral filtering and mode-aware control.}
A complementary perspective interprets graph propagation as a form of spectral filtering~\cite{Zhu2025SDMG}, where repeated message passing typically acts as a low-pass operator. This inherent bias suppresses high-frequency variations, heavily favoring globally smooth representations. To counteract this and adapt to structural heterogeneity, a rich literature has emerged around designing flexible graph filters. For instance, learning node-dependent filters can alleviate the mismatch between global low-pass operations and local heterogeneous homophily conditions~\cite{zheng2024nodeoriented}. Crucially, these developments highlight that the impact of deep propagation is governed not just by the number of layers, but by precisely which graph frequencies are amplified or attenuated.

Parallel to modifying the filter response, other approaches aim to improve representation quality by fundamentally enriching the structural information available to the model. This includes augmenting standard neighborhood aggregation with subgraph isomorphism counting to strictly increase expressive power~\cite{bouritsas2023subgraph}, or extending graph learning to simplicial complexes to capture both direct and indirect higher-order relations~\cite{wu2024simplicial}.

Generally, these advancements highlight that scaling deep graph models demands a delicate balance between expressive structural encoding and controlled propagation behavior. While our work is rooted in this spectral viewpoint, we depart from the common objective of simply designing another expressive filter. Instead, we investigate a more fundamental mechanism: how each graph frequency mode inherently evolves, persists, or vanishes as depth increases. By introducing a graph-frequency-dependent stability curve, we explicitly characterize the long-term fate of deviations from synchrony. This reframing elevates mode-wise propagation behavior from an implicit consequence of a chosen filter into an explicitly analyzable and tunable object, forging a principled link between spectral structure, dynamical stability, and deep information preservation.

\subsection{Continuous dynamics and dynamical systems perspectives.}
Another compelling alternative to discrete message passing recasts graph propagation as the evolution of a continuous dynamical system, where layer depth naturally corresponds to time~\cite{rusch2022graph,gao2026hgnnv2}. This continuous perspective gives a rigorous mathematical way for analyzing representation collapse. For instance, framing deep graph learning through the lens of coupled oscillators demonstrates that mitigating over-smoothing is intrinsically linked to the stability of the underlying continuous ODEs~\cite{rusch2022graph}. Expanding on this, a broader class of continuous mechanisms, encompassing diffusion, wave, and reaction equations, provides a principled arsenal to counter the standard smoothing-dominated paradigm~\cite{han2024neuraldiffusion}. Theoretical investigations further corroborate this view: analyses of deep GCNs via Gaussian processes reveals that over-smoothing is not an inescapable fate, but rather depends on critical phase transitions within the limiting network dynamics~\cite{epping2024notoversmooth}.

Although these works establish the value of dynamical systems theory for understanding deep GNNs, many existing analyses remain tied to a particular architecture, discretization, or propagation rule. Our work is inspired by the classical master stability formalism for synchronized coupled systems~\cite{pecora1998master}, yet its goal is different from merely reinterpreting a given GNN through dynamical language. We develop a decoupled mode-wise description that separates graph topology from local update dynamics through a stability curve over graph frequencies. This decoupling is important because it elevates stability from an architecture-specific observation to a transferable design principle, enabling stability prediction and stability shaping across different graphs and deep propagation operators.

\subsection{Reaction diffusion and graph pattern formation.}
PDE-inspired graph models enrich the dynamical perspective by coupling simple diffusion with counteracting reaction or transport mechanisms~\cite{chamberlain2021grand}. This class of models is particularly well-suited for scaling network depth, as it introduces explicit physical priors to prevent indiscriminate smoothing. For instance, incorporating flexible reaction terms can actively sustain spatial variation, improving robustness against over-smoothing across diverse homophily conditions~\cite{choi2023gread}. Similarly, introducing directional transport alongside nonlinear channel interactions broadens the class of achievable propagation behaviors~\cite{eliasof2024featuretransport}. Beyond pure PDE analogies, disentangling the propagation flow over relevant versus irrelevant connections, such as through edge splitting, further reinforces the consensus that naive smoothing indiscriminately mixes both harmful and useful signals~\cite{guo2024esgnn}.

Our work is closest in spirit to these dynamical and PDE-motivated approaches, since we also aim to control how deep propagation acts on nontrivial graph structure. The key distinction, however, is that prior models typically obtain their stability behavior implicitly from a chosen transport, diffusion, or reaction mechanism, whereas we make the mode-wise stability profile itself an explicit design target. In other words, our goal is not only to propose a new propagation dynamics, but to specify and shape in advance which graph-frequency bands should decay, persist, or be selectively amplified as depth grows. This viewpoint yields a qualitatively different level of control over deep graph learning. It supports both controlled band-limited instability for graph pattern formation and near-critical yet stable propagation for preserving informative nonsynchronous components over long depths. Thus, our framework provides a unified stability-centered perspective that goes beyond architecture-specific PDE design and directly connects spectral modes, dynamical behavior, and deep representation quality.

\section{Preliminaries on Deep Graph Dynamics}
\label{sec:prelim}

Through the lens of graph-coupled dynamical systems,
many modern GNNs can be viewed as iterating a shared node-wise update together with a propagation operator, where depth plays the role of time.
This perspective is useful for analyzing stability and  describing how node representations evolve under repeated propagation.
Our theory targets \emph{general} nonlinear graph-coupled dynamics.
As a running example, we employ reaction--diffusion dynamics~\cite{chamberlain2021grand} throughout the paper.

\subsection{Graphs and graph-coupled propagation}
\label{sec:graphs-and-graph-coupled-propagation}

Let $\mathcal G=(\mathcal V,\mathcal E)$ be an undirected weighted graph with $|\mathcal V|=n$,
node set $\mathcal V=\{1,\dots,n\}$, and adjacency $\mathbf A\in\mathbb R^{n\times n}$.

Let $\mathbf d=\mathbf A\mathbf 1$ be the degree vector, where $\mathbf 1\in\mathbb R^n$ is the all-ones vector, and let $\mathbf D=\mathrm{diag}(\mathbf d)$.
We use a normalized propagation backbone $\widetilde{\mathbf S}\in\mathbb R^{n\times n}$, for example
$\widetilde{\mathbf S}_{\mathrm{sym}}=\mathbf D^{-1/2}\mathbf A\mathbf D^{-1/2}$ or
$\widetilde{\mathbf S}_{\mathrm{rw}}=\mathbf D^{-1}\mathbf A$.
We define the Laplacian-type coupling
\begin{equation}
\mathbf G := \widetilde{\mathbf S}-\mathbf I_n,
\end{equation}
so that $-\mathbf G=\mathbf I_n-\widetilde{\mathbf S}$ acts as a normalized Laplacian, where $\mathbf I_n$ is the $n\times n$ identity.

When $\widetilde{\mathbf S}=\widetilde{\mathbf S}_{\mathrm{rw}}$, we have the zero-row-sum property $\mathbf G\mathbf 1=\mathbf 0$.
For symmetric normalization, the corresponding dynamics are spectrally equivalent via a similarity transform
(see Appendix~\ref{app:lem:norm-vs-rw-main}), so the mode-wise stability conclusions in this paper remain unchanged.

\subsubsection{Node states.}
Let $\mathbf h_i(t)\in\mathbb R^q$ be the state at node $i$ at time $t$.
We stack all node states as
$
\mathbf H(t)\coloneqq[\mathbf h_1(t);\dots;\mathbf h_n(t)]\in\mathbb R^{nq}.
$
When convenient, we reshape $\mathbf H(t)$ into a matrix $\mathbf H_{\mathrm{mat}}(t)\in\mathbb R^{n\times q}$
whose $i$-th row is $\mathbf h_i(t)^\top$.

\subsubsection{Spectrum and graph frequencies.}
We work with the eigenvalues of $-\mathbf G=\mathbf I_n-\widetilde{\mathbf S}$.
For common normalized backbones on undirected graphs, these eigenvalues are real and nonnegative.
We denote them by $\{ \mu_k \}_{k=1}^n$ and order them with $\mu_1=0$ for a connected graph.
These scalars will index the mode-wise stability analysis in Section~\ref{sec:theory}.

\subsection{Heat diffusion view of over-smoothing}
\label{sec:prelim-heat}

We show the simplest continuous-time dynamics behind over-smoothing by keeping only propagation.
A residual propagation layer (e.g., GCN~\cite{GCN2017}) can be written in matrix form as
\begin{equation}
\mathbf H_{\mathrm{mat}}^{\ell+1}
=
\mathbf H_{\mathrm{mat}}^\ell
+
\tau\,\mathbf G\,\mathbf H_{\mathrm{mat}}^\ell,
\label{eq:disc-heat-G}
\end{equation}
where $\tau>0$ is a step size that maps layer depth to diffusion time.
Equation~\eqref{eq:disc-heat-G} is the explicit Euler discretization of Equation~\eqref{eq:cont-heat-G}
\begin{equation}
\frac{\mathrm d \mathbf H_{\mathrm{mat}}(t)}{\mathrm dt}
=
\mathbf G\,\mathbf H_{\mathrm{mat}}(t).
\label{eq:cont-heat-G}
\end{equation}
The solution is
\begin{equation}
\mathbf H_{\mathrm{mat}}(t)
=
e^{t\mathbf G}\,\mathbf H_{\mathrm{mat}}(0),
\label{eq:heat-kernel-G}
\end{equation}
so propagation applies a diffusion kernel $e^{t\mathbf G}$.

Since the eigenvalues of $-\mathbf G$ are $\{\mu_k\}$, the eigenvalues of $\mathbf G$ are $\{-\mu_k\}$, so each graph-frequency component is scaled by $e^{-t\mu_k}$. For common normalized backbones on undirected graphs, $\mu_k\in[0,2]$~\cite{Zhu2025SDMG}.
Thus larger $\mu_k$ (equivalently more negative eigenvalues of $\mathbf G$) decay faster, quickly removing high-frequency variation and leaving increasingly smooth, low-frequency representations as depth/time increases.

\subsection{Reaction--diffusion dynamics as a running example}
\label{sec:general-rd}

We consider graph reaction--diffusion dynamics as the ground of this paper:
\begin{equation}
\frac{\mathrm d \mathbf h_i(t)}{\mathrm dt}
=
f\big(\mathbf h_i(t)\big)
+
\sum_{j=1}^n G_{ij}\,\mathbf C\,\mathbf h_j(t),
\label{eq:rd-first}
\end{equation}
where $f:\mathbb R^q\to\mathbb R^q$ is a shared node-wise nonlinear update and
$\mathbf C\in\mathbb R^{q\times q}$ couples feature channels in the propagation term.
Setting $f\equiv 0$ recovers purely linear message passing, including diffusion and heat-kernel style propagation, as we showed in Section~\ref{sec:prelim-heat}.

Equation~\eqref{eq:rd-first} contains two ingredients common to dynamical-system-based GNNs.
The local term $f$ acts independently at each node, while the coupling term mixes information through the propagation backbone.
Although we use reaction--diffusion dynamics as a representative template, our following stability analysis applies to a broad class of nonlinear graph-coupled dynamics that preserve synchronized behavior.

\begin{figure*}[t] 
\vskip -0.0in
\begin{center}
\includegraphics[width=1.0\linewidth]{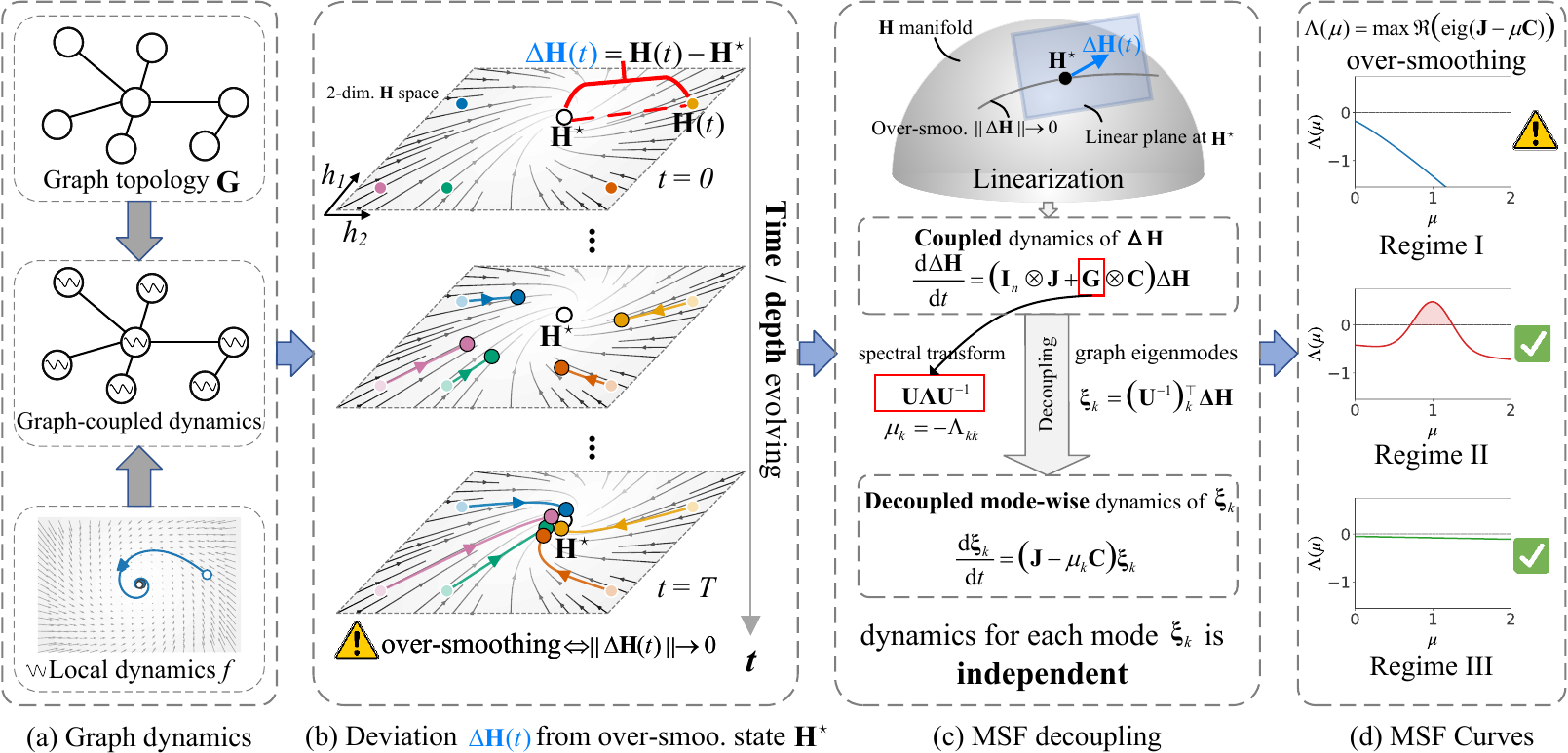}
\caption{
Overview of our theory.
\textbf{(a)} A deep GNN can be viewed as \emph{local update} $f$ coupled by \emph{graph mixing} $\mathbf G$.
\textbf{(b)} We study over-smoothing by tracking the deviation from an all-nodes-equal state $\mathbf H^\star$:
$\Delta\mathbf H(t)=\mathbf H(t)-\mathbf H^\star$; over-smoothing means $\|\Delta\mathbf H(t)\|\to 0$ as depth grows.
\textbf{(c)} Linearizing at $\mathbf H^\star$ and diagonalizing $\mathbf G$ turn the coupled system into independent graph-frequency modes
$\dot{\boldsymbol\xi}_k=(\mathbf J-\mu_k\mathbf C)\boldsymbol\xi_k$.
\textbf{(d)} The master stability curve $\Lambda(\mu)$ predicts per-mode growth/decay and highlights three behaviors:
(I) strong decay (diffusion over-smoothing), (II) band-limited growth (patterning), and (III) weak decay (near-critical, deep but stable).
}

\label{fig:theory}
\end{center}
\vskip -0.0in
\end{figure*}
\section{Theory for Deep Propagation}
\label{sec:theory}

We develop a spectral mode-wise stability theory for deep graph propagation.
Our goal is to predict when repeated propagation suppresses, preserves, or amplifies non-constant structure, and to translate this understanding into design objectives.
Fig.~\ref{fig:theory} summarizes the main logic of our framework, from graph-coupled nonlinear dynamics and deviations from synchrony to mode-wise decoupling and the resulting master stability regimes.

\subsection{Synchronized manifold.}
Assume $\mathbf G\mathbf 1=\mathbf 0$.
Then the synchronized set
$
\mathcal S=\{\mathbf 1\otimes \mathbf h:\ \mathbf h\in\mathbb R^q\}
$
is invariant under the coupled dynamics.
Here $\otimes$ denotes the Kronecker product, so $\mathbf 1\otimes \mathbf h$ repeats the same vector at every node.
Thus, if all nodes start from the same state, they remain identical over time.
We next study how small deviations from $\mathcal S$ evolve, connecting deep propagation to stability and over-smoothing.

\subsection{Over-smoothing: non-synchronous decay}
\label{sec:msf-oversmoothing}
We formalize over-smoothing as the contraction of small deviations away from the synchronized manifold $\mathcal S$. 
To measure such deviations, we pick a reference synchronous solution and study the evolution of small perturbations around it.

Suppose that the node dynamics $\frac{\mathrm d \mathbf h_i(t)}{\mathrm dt}=f(\mathbf h_i(t))$ have an equilibrium $\mathbf h^\star$ with $f(\mathbf h^\star)=\mathbf 0$.
Then $\mathbf H^\star=\mathbf 1\otimes \mathbf h^\star\in\mathbb R^{nq}$ is a synchronous equilibrium of Equation~\eqref{eq:rd-first}.
We study deviations $\Delta\mathbf H(t)=\mathbf H(t)-\mathbf H^\star$, linearize Equation~\eqref{eq:rd-first} around $\mathbf H^\star$, and obtain
\begin{equation}
\frac{\mathrm d\,\Delta \mathbf H}{\mathrm dt}
=
\bigl(\mathbf I_n\otimes \mathbf J + \mathbf G\otimes \mathbf C\bigr)\Delta\mathbf H,
\label{eq:variational-main-rewrite}
\end{equation}
where $\mathbf J=\nabla f(\mathbf h^\star)$ is the Jacobian of the local reaction
(see Appendix~\ref{app:msf-variational} for the full derivation).

Over-smoothing corresponds to the decay of deviations orthogonal to the synchronized manifold.
Let $\mathbf \Pi=\mathbf I_n-\frac{1}{n}\mathbf 1\mathbf 1^\top$ denote the projection that removes the synchronous component.

\begin{definition}[Over-smoothing as decay of non-synchronous deviations]
\label{def:oversmoothing}
Let $\Delta\mathbf H(t)$ follow Equation~\eqref{eq:variational-main-rewrite} and define
$\Delta\mathbf H_\perp(t)=(\mathbf \Pi\otimes \mathbf I_q)\Delta\mathbf H(t)$.
We say the dynamics exhibit an over-smoothing if $\|\Delta\mathbf H_\perp(t)\|\to 0$ as $t\to\infty$,
i.e., all node-wise differences orthogonal to the synchronous manifold vanish with depth.
\end{definition}

Thus, understanding over-smoothing reduces to understanding when the non-synchronous component
$\Delta\mathbf H_\perp(t)$ contracts under Equation~\eqref{eq:variational-main-rewrite}, and how fast it contracts.

\subsection{Master stability decomposition}
\label{sec:general-msf}
Equation~\eqref{eq:variational-main-rewrite} couples graph topology and feature dynamics in an $nq$-dimensional linear system.
We decouple this system by expanding $\Delta\mathbf H$ in the eigenbasis of $\mathbf G$, as illustrated in Fig.~\ref{fig:theory}(c).
This yields independent dynamics for each graph mode, and produces a scalar stability curve that depends only on the local Jacobian $\mathbf J$, the coupling $\mathbf C$, and a graph eigenvalue $\mu$. The corresponding three regimes are sketched in Fig.~\ref{fig:theory}(d).

\begin{theorem}[General master-stability decomposition at equilibrium]
\label{thm:general-msf}
Assume $\mathbf G$ is diagonalizable with eigen-decomposition
$\mathbf G=\mathbf U\boldsymbol\Lambda \mathbf U^{-1}$,
$\boldsymbol\Lambda=\mathrm{diag}(\kappa_1,\dots,\kappa_n)$,
and define $\mu_k:=-\kappa_k\ge 0$.
Let the modal coordinates be $\boldsymbol\xi=(\mathbf U^{-1}\otimes \mathbf I_q)\Delta\mathbf H$.
Then each mode $\boldsymbol\xi_k\in\mathbb R^q$ satisfies
\begin{equation}
\frac{\mathrm d \boldsymbol\xi_k}{\mathrm dt}
=
\bigl(\mathbf J-\mu_k \mathbf C\bigr)\boldsymbol\xi_k,
\qquad k=1,\dots,n.
\label{eq:mode-ode-mu}
\end{equation}
\end{theorem}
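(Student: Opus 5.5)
The plan is to substitute the modal change of variables directly into the variational equation \eqref{eq:variational-main-rewrite} and use the mixed-product property of the Kronecker product, $(\mathbf A\otimes\mathbf B)(\mathbf C\otimes\mathbf D)=(\mathbf A\mathbf C)\otimes(\mathbf B\mathbf D)$. Since $\mathbf U$ is invertible, so is $\mathbf U\otimes\mathbf I_q$, with inverse $\mathbf U^{-1}\otimes\mathbf I_q$. Hence $\Delta\mathbf H=(\mathbf U\otimes\mathbf I_q)\boldsymbol\xi$, and this transformation is constant in time. Differentiating gives
\begin{equation*}
\frac{\mathrm d\boldsymbol\xi}{\mathrm dt}
=(\mathbf U^{-1}\otimes\mathbf I_q)\bigl(\mathbf I_n\otimes\mathbf J+\mathbf G\otimes\mathbf C\bigr)(\mathbf U\otimes\mathbf I_q)\boldsymbol\xi .
\end{equation*}

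Next I will simplify the two terms separately. For the local term, $(\mathbf U^{-1}\otimes\mathbf I_q)(\mathbf I_n\otimes\mathbf J)(\mathbf U\otimes\mathbf I_q)=(\mathbf U^{-1}\mathbf U)\otimes\mathbf J=\mathbf I_n\otimes\mathbf J$. For the coupling term, $(\mathbf U^{-1}\otimes\mathbf I_q)(\mathbf G\otimes\mathbf C)(\mathbf U\otimes\mathbf I_q)=(\mathbf U^{-1}\mathbf G\mathbf U)\otimes\mathbf C=\boldsymbol\Lambda\otimes\mathbf C$. The transformed generator is therefore $\mathbf I_n\otimes\mathbf J+\boldsymbol\Lambda\otimes\mathbf C$, which is block diagonal with $q\times q$ blocks $\mathbf J+\kappa_k\mathbf C$. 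Reading off the $k$-th block of $\boldsymbol\xi$ and substituting $\kappa_k=-\mu_k$ gives \eqref{eq:mode-ode-mu}.

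There is no real analytical obstacle; the content is bookkeeping. One point needs care: if $\mathbf G$ has complex eigenvectors, as can happen for nonsymmetric backbones, then $\boldsymbol\xi_k\in\mathbb C^q$. This is not an issue for the two normalized backbones considered in the paper. For $\widetilde{\mathbf S}_{\mathrm{sym}}$ the matrix $\mathbf G$ is real symmetric, and $\widetilde{\mathbf S}_{\mathrm{rw}}$ is similar to it via $\mathbf D^{1/2}$ (Appendix~\ref{app:lem:norm-vs-rw-main}). In both cases $\mathbf U$ can be chosen real, and the $\kappa_k$ are real and nonpositive, which also justifies $\mu_k\ge0$. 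I will remark on this briefly, and note that otherwise the identity holds verbatim over $\mathbb C$.

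I will also note that the first mode, with $\mu_1=0$ and eigenvector $\mathbf 1$, gives the synchronous direction $\dot{\boldsymbol\xi}_1=\mathbf J\boldsymbol\xi_1$. The remaining modes then span the non-synchronous component $\Delta\mathbf H_\perp$ of Definition~\ref{def:oversmoothing}. For symmetric $\mathbf G$ this is exact by orthogonality of the eigenvectors; for the random-walk case it holds in the $\mathbf D$-weighted sense.
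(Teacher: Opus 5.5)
Your proof is correct and is the paper's argument essentially verbatim. You substitute $\Delta\mathbf H=(\mathbf U\otimes\mathbf I_q)\boldsymbol\xi$, apply the mixed-product rule to obtain $\mathbf I_n\otimes\mathbf J+\boldsymbol\Lambda\otimes\mathbf C$, and read off the diagonal blocks $\mathbf J+\kappa_k\mathbf C=\mathbf J-\mu_k\mathbf C$. One caveat concerns your closing aside, which the theorem does not need: with symmetric normalization the $\mu_1=0$ eigenvector is $\mathbf D^{1/2}\mathbf 1$, not $\mathbf 1$, and $\mathbf G\mathbf 1\neq\mathbf 0$ on non-regular graphs, so the modes $k\ge2$ do not coincide with the range of $\mathbf\Pi\otimes\mathbf I_q$ there, and your ``exact by orthogonality'' claim holds only for regular graphs.
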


The proof is given in Appendix~\ref{app:mode-odes}.
Equivalently, the variational dynamics decompose into $n$ independent $q$-dimensional linear systems
\begin{equation}
\frac{\mathrm d \boldsymbol\xi}{\mathrm dt}
=
\mathrm{diag}\bigl(\mathbf J-\mu_1\mathbf C,\dots,\mathbf J-\mu_n\mathbf C\bigr)\,\boldsymbol\xi .
\label{eq:modal-block-diag}
\end{equation}

Theorem~\ref{thm:general-msf} reduces stability to the spectra of the mode matrices $\mathbf J-\mu_k\mathbf C$.
We therefore define the master stability function (MSF) for deep propagation
\begin{equation}
\Lambda(\mu)\coloneqq \max \Re\bigl(\mathrm{eig}(\mathbf J-\mu\mathbf C)\bigr).
\label{eq:msf-def}
\end{equation}
For a connected graph, $\mu_1=0$ corresponds to the synchronous mode and modes with $k\ge2$ are non-synchronous.
Moreover, $\Delta\mathbf H_\perp$ in Definition~\ref{def:oversmoothing} has support only on modes $k\ge2$.
Hence, if $\Lambda(\mu_k)<0$ for all $k\geq 2$, then all non-synchronous modes are contractive, so node-wise variations decay with depth and eventually collapse.

\subsection{MSF regimes and stability-oriented objectives}
\label{sec:three-regimes}
The sign and magnitude of $\Lambda(\mu)$ in Equation~\eqref{eq:msf-def} define three regimes that we target in our deep GNN layer designs.

\subsubsection{(I) Dissipative regime (diffusion-only).}
When $f\equiv 0$, we have $\mathbf J=\mathbf 0$ and therefore $\Lambda(\mu) = \mu \Lambda(1) \le 0$ for all $\mu\ge 0$.
Larger $\mu$ yields more negative margins, so higher-frequency non-synchronous modes decay faster.
This recovers the over-smoothing behavior of a heat-kernel-style message passing, such as GCN. As, for example, the analysis in \cite{cai2020note} and \cite{wu2023demystifying} shows, many typical activation functions are 1-Lipschitz and thus do not materially change this smoothing dynamic.

\subsubsection{(II) Diffusion-induced (DI) instability regime.}
A diffusion-induced instability occurs when the local reaction is stable at $\mu=0$ but becomes unstable over an intermediate range of $\mu$
\begin{equation}
\Lambda(0)<0,
\qquad
\exists\,\mu\in[\mu_a,\mu_b]\ \text{such that}\ \Lambda(\mu)>0,
\label{eq:turing-band}
\end{equation}
and $\Lambda(\mu)<0$ outside this interval.
On a graph, $\mu$ is instantiated by the discrete spectrum $\{\mu_k\}$ of $-\mathbf G$.
Pattern-forming growth occurs when some non-synchronous mode enters the positive-MSF band, meaning that there exists $k\ge2$ with $\Lambda(\mu_k)>0$.
To make this criterion verifiable and trainable, we specialize $\mathbf J-\mu\mathbf C$ to a $2\times2$ activator--inhibitor block with anisotropic diagonal diffusion, where instability reduces to a determinant test and yields an explicit unstable interval $(\mu_-,\mu_+)$ (Corollary~\ref{cor:turing-2x2-band} in Appendix~\ref{app:di-msf-band}).

\subsubsection{(III) Near-critical (NC) stable regime.}
To enable deep propagation without collapse, we want all non-synchronous modes to be stable while remaining only weakly contractive
\begin{equation}
-\delta_{\text{MSF}} \le \Lambda(\mu_k) \le -\epsilon_{\text{MSF}} <0,
\qquad k\ge 2,
\label{eq:narrow-band}
\end{equation}
with small $\delta_{\text{MSF}}$ and potentially very small $\epsilon_{\text{MSF}}$.
In discrete time, this corresponds to layer maps whose spectral radii are close to one on modes with $k\ge2$.
This mitigates over-smoothing while preserving stability.

\section{Towards Practical Architectural Design}
\label{sec:design}

Section~\ref{sec:theory} turns layer design into stability shaping by controlling the MSF curve $\Lambda(\mu)$.
Consequently, many architectures can avoid over-smoothing as long as they operate in Regime~II 
or Regime~III 
.
We instantiate this principle with two variants: SDGL-DI based on Schnakenberg-type~\cite{schnakenberg1979simple} activator--inhibitor dynamics 
(Regime~II) 
and SDGL-NC based on predator--prey dynamics~\cite{takeuchi1996global}
(Regime~III).

\subsection{General layer formulation}
\label{sec:design-general-layer}

Both SDGL variants share a unified coupled-field architecture obtained by discretizing
a two-field reaction--diffusion template.
Let $\mathbf X\in\mathbb R^{n\times d_{\text{in}}}$ be input node features on a graph with $n$ nodes.
A learnable lift (e.g., two heads of an MLP encoder) maps $\mathbf X$ into two interacting fields
$\mathbf R^0,\mathbf F^0\in\mathbb R^{n\times d}$, where $d$ is the hidden width per field.
We fix a diffusion backbone $\widetilde{\mathbf S}\in\mathbb R^{n\times n}$ and use
$\mathbf G=\widetilde{\mathbf S}-\mathbf I_n$.

At layer $\ell$, SDGL applies a coordinate-wise reaction and an anisotropic diffusion update
\begin{align}
\mathbf R^{\ell+1} &= \mathbf R^\ell
+ \Delta t\Big(f_R(\mathbf R^\ell,\mathbf F^\ell;\boldsymbol\theta)
+ C_R\,\mathbf G\mathbf R^\ell\Big),
\label{eq:gen_update_R}\\
\mathbf F^{\ell+1} &= \mathbf F^\ell
+ \Delta t\Big(f_F(\mathbf R^\ell,\mathbf F^\ell;\boldsymbol\theta)
+ C_F\,\mathbf G\mathbf F^\ell\Big),
\label{eq:gen_update_F}
\end{align}
where $\Delta t>0$ is the step size, $\boldsymbol\theta$ collects the learnable parameters in the reaction terms,
and $C_R,C_F\in\mathbb R_{>0}$ are learnable diffusion strengths.
The reaction terms $f_R,f_F$ act row-wise and channel-wise, meaning that the update of channel $j$ at node $i$
depends only on $(\mathbf r_{ij}^\ell,\mathbf f_{ij}^\ell)$.
Linearizing the layer around a synchronous equilibrium yields the per-mode matrices
$\mathbf J-\mu\mathbf C$ from Section~\ref{sec:theory}, so the architectural design reduces to stability shaping.
The unified layer arises from an explicit Euler discretization of a two-field graph reaction--diffusion system. A compact derivation is provided below, and additional algebraic details are deferred to Appendix~\ref{app:rd-discretization}.

\subsubsection{Continuous-time origin of the layer.}
The unified SDGL layer can be viewed as an explicit Euler discretization of a two-field graph reaction--diffusion system
\begin{align}
\frac{\mathrm d\mathbf R}{\mathrm dt}
&=
f_{\mathbf R}(\mathbf R,\mathbf F;\boldsymbol\theta)
+ C_R\,\mathbf G\mathbf R,\\
\frac{\mathrm d\mathbf F}{\mathrm dt}
&=
f_{\mathbf F}(\mathbf R,\mathbf F;\boldsymbol\theta)
+ C_F\,\mathbf G\mathbf F,
\end{align}
where $f_{\mathbf R},f_{\mathbf F}$ are row-wise reaction maps shared across nodes.
Applying explicit Euler with step size $\Delta t$ at $t_\ell=\ell\Delta t$ yields
Equations~\eqref{eq:gen_update_R}--\eqref{eq:gen_update_F}.
This makes the architectural design directly compatible with the continuous-time MSF analysis in Section~\ref{sec:theory}.

\subsubsection{Output representation.}
After $L$ layers, we concatenate the two final fields along the feature dimension and form
\begin{equation}
\mathbf H_{\mathrm{mat}} \coloneqq [\mathbf R^{L}||\mathbf F^{L}] \in \mathbb R^{n\times 2d},
\end{equation}
where $[\cdot||\cdot]$ means concatenation.
This matches the two-field node state $\mathbf h_i=[\mathbf r_i||\mathbf f_i]$ used in the MSF analysis
and retains both channels for downstream prediction.

\subsection{SDGL-DI layer (DI patterning, Regime~II)}
\label{sec:design-di-layer}

SDGL-DI instantiates $f_R,f_F$ by a Schnakenberg-type activator--inhibitor reaction.
The following parameters are learnable and shared across nodes: 
$\boldsymbol\psi,\boldsymbol\chi\in\mathbb R^d_{>0}$ and $\eta\in\mathbb R_{>0}$.
We shift the variables such that the synchronous equilibrium is exactly
$\mathbf R=\mathbf 0$ and $\mathbf F=\mathbf 0$.
Details are given in Appendix~\ref{app:di-schnakenberg}.
$\odot$ denotes elementwise multiplication, and $\oslash$ denotes elementwise division.

Define
$\mathbf p^\star=\boldsymbol\psi+\boldsymbol\chi\in\mathbb R^d_{>0}$ and
$\mathbf v^\star=\boldsymbol\chi\oslash\big(\eta \mathbf p^\star \odot \mathbf p^\star\big)\in\mathbb R^d_{>0}$.
We add these offsets to every node and set
$\mathbf P^\ell=\mathbf R^\ell+\mathbf p^\star$,
$\mathbf V^\ell=\mathbf F^\ell+\mathbf v^\star$, and
$\boldsymbol\zeta^\ell=\eta\mathbf P^\ell \odot \mathbf P^\ell\odot \mathbf V^\ell$.
The SDGL-DI layer is then
\begin{align}
\mathbf R^{\ell+1}
&=
\mathbf R^\ell
+
\Delta t\Big(
\boldsymbol\psi-\mathbf P^\ell+\boldsymbol\zeta^\ell
+
C_R\,\mathbf G\mathbf R^\ell
\Big),
\label{eq:sdgl-di-layer-r}\\
\mathbf F^{\ell+1}
&=
\mathbf F^\ell
+
\Delta t\Big(
\boldsymbol\chi-\boldsymbol\zeta^\ell
+
C_F\,\mathbf G\mathbf F^\ell
\Big).
\label{eq:sdgl-di-layer-f}
\end{align}
Under the MSF, this choice produces a diffusion-induced, band-limited unstable window (Regime~II).
The continuous-time ODE formulation and the resulting closed-form band criterion are provided in Appendix~\ref{app:di-schnakenberg} and~\ref{app:di-msf-band}.

\subsection{SDGL-NC layer (Near-critical, Regime~III)}
\label{sec:design-nc-layer}

SDGL-NC instantiates $f_R$ and $f_F$ using a predator--prey reaction,
such that the resulting mode matrices $\mathbf J-\mu\mathbf C$ keep all non-synchronous modes stable while remaining near-critical.
This yields a deep propagation with mild contraction in Regime~III.

\subsubsection{Why NC is near-critical.}
Ignoring diffusion, the coordinate-wise predator--prey reaction has a nontrivial coexistence equilibrium.
For each feature coordinate $j$, it satisfies
\begin{equation}
r_j^\star=\frac{\gamma_j}{\delta_j},
\qquad
f_j^\star=\frac{\alpha_j}{\beta_j}.
\end{equation}
At this equilibrium, the local $2\times2$ Jacobian block is
\begin{equation}
\mathbf J_j=
\begin{bmatrix}
0 & -\beta_j\gamma_j/\delta_j\\
\delta_j\alpha_j/\beta_j & 0
\end{bmatrix},
\end{equation}
whose eigenvalues are
\begin{equation}
\lambda_{j,\pm}=\pm i\sqrt{\alpha_j\gamma_j}.
\end{equation}
Therefore, the uncoupled node dynamics are locally critical, and diffusion shifts the modal dynamics to $\mathbf J-\mu\mathbf C$, yielding stable but only weakly contractive propagation. See Appendix~\ref{app:nc-imag-spectrum} for more details.

The following parameters are learnable and shared across nodes:
$\boldsymbol\alpha,\boldsymbol\beta,\boldsymbol\delta,\boldsymbol\gamma\in\mathbb R^d_{>0}$.
The SDGL-NC layer is then
\begin{align}
\mathbf R^{\ell+1}
&=
\mathbf R^\ell
+ \Delta t\Big(
\boldsymbol\alpha\odot\mathbf R^\ell
-\boldsymbol\beta\odot\mathbf R^\ell\odot\mathbf F^\ell
+ C_R\,\mathbf G\mathbf R^\ell
\Big),
\notag
\\
\mathbf F^{\ell+1}
&=
\mathbf F^\ell
+ \Delta t\Big(
\boldsymbol\delta\odot\mathbf R^\ell\odot\mathbf F^\ell
-\boldsymbol\gamma\odot\mathbf F^\ell
+ C_F\,\mathbf G\mathbf F^\ell
\Big).
\notag
\end{align}

In principle, a sufficient step-size condition ensures that the per-mode linearized transfer map has the spectral radius in
$[1-\varepsilon_{\text{MSF}},1)$ on all non-synchronous modes.
For a graph frequency $\mu$, the per-mode map is
$
T_{\mathrm{NC}}(\mu)=\mathbf I_{2d}+\Delta t\bigl(\mathbf J-\mu\mathbf C\bigr).
$
We formalize this in Theorem~\ref{thm-near-unit-radius}.

\begin{theorem}[Spectral radius close to one for NC layers]
\label{thm-near-unit-radius}
Assume positive predator--prey parameters, such that the node-wise reaction admits a coexistence equilibrium and the Jacobian $\mathbf J$ at this equilibrium has purely imaginary eigenvalues in each $2\times 2$ feature block.
Let $\mathbf C=\mathrm{diag}(C_R\mathbf I_d,\,C_F\mathbf I_d)$ with $C_R,C_F>0$.
Let $\{\mu_k\}_{k=1}^n$ be the nonnegative eigenvalues of $-\mathbf G$, ordered so that $\mu_1=0$ for a connected graph.
Then for every non-synchronous mode $k\ge 2$ and any tolerance $\varepsilon_{\text{MSF}}\in(0,1)$,
there exists $\Delta t_\varepsilon>0$ such that for all $0<\Delta t\le \Delta t_\varepsilon$,
\begin{equation}
1-\varepsilon_{\text{MSF}}
\;\le\;
\rho\bigl(T_{\mathrm{NC}}(\mu_k)\bigr)
\;<\;
1,
\end{equation}
where $\rho(\cdot)$ denotes the spectral radius.
\end{theorem}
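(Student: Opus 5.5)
The plan is to reduce $T_{\mathrm{NC}}(\mu_k)=\mathbf I_{2d}+\Delta t(\mathbf J-\mu_k\mathbf C)$ to a direct sum of $2\times2$ blocks, find the eigenvalues of each block explicitly, and then control $|1+\Delta t\lambda|$ by a second-order expansion in $\Delta t$. Because the reaction acts coordinate-wise and $\mathbf C=\mathrm{diag}(C_R\mathbf I_d,C_F\mathbf I_d)$, a permutation similarity (pairing channel $j$ of $\mathbf R$ with channel $j$ of $\mathbf F$) makes $\mathbf J-\mu\mathbf C$ block diagonal with blocks
\begin{equation*}
\mathbf M_j(\mu)=\begin{bmatrix} -\mu C_R & -\beta_j\gamma_j/\delta_j\\ \delta_j\alpha_j/\beta_j & -\mu C_F\end{bmatrix},\qquad j=1,\dots,d.
\end{equation*}
Spectral radius is invariant under similarity, so $\rho(T_{\mathrm{NC}}(\mu))=\max_j\max_{\lambda\in\mathrm{eig}(\mathbf M_j(\mu))}|1+\Delta t\,\lambda|$.

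\textbf{Step 1 (continuous-time stability).} For each block I will record
\begin{equation*}
\mathrm{tr}\,\mathbf M_j=-\mu(C_R+C_F),\qquad \det\mathbf M_j=\mu^2C_RC_F+\alpha_j\gamma_j>0.
\end{equation*}
For a connected graph, $\mu_k>0$ for every $k\ge2$. Hence the trace is strictly negative and the determinant strictly positive, so the Routh--Hurwitz test gives $\Re\lambda<0$ for both eigenvalues of every block. The eigenvalues are also nonzero, since the determinant is positive. This is exactly where connectivity, equivalently excluding $k=1$, is used: at $\mu=0$ one recovers the purely imaginary spectrum $\pm i\sqrt{\alpha_j\gamma_j}$.

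\textbf{Step 2 (upper bound).} Fix $k\ge2$ and an eigenvalue $\lambda$ of some $\mathbf M_j(\mu_k)$. I will use
\begin{equation*}
|1+\Delta t\lambda|^2=1+2\Delta t\,\Re\lambda+\Delta t^2|\lambda|^2,
\end{equation*}
which is strictly less than $1$ precisely when $0<\Delta t<-2\Re\lambda/|\lambda|^2$. Define $\Delta t^{(1)}$ as the minimum of these thresholds over the finitely many triples $(k,j,\lambda)$; it is positive. Taking $\Delta t$ strictly below it makes every modal eigenvalue lie strictly inside the unit disk, so $\rho<1$.

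\textbf{Step 3 (lower bound).} By the reverse triangle inequality, $|1+\Delta t\lambda|\ge 1-\Delta t|\lambda|$. Since $\rho$ is the maximum over eigenvalues, it suffices that one eigenvalue satisfies $1-\Delta t|\lambda|\ge 1-\varepsilon_{\text{MSF}}$; I will in fact require it for all of them. Set $\Lambda_{\max}=\max_{k\ge2,j,\lambda}|\lambda|$, which is finite, and $\Delta t^{(2)}=\varepsilon_{\text{MSF}}/\Lambda_{\max}$. Finally take $\Delta t_\varepsilon=\min\{\tfrac12\Delta t^{(1)},\Delta t^{(2)}\}$.

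The only delicate point is uniformity. The threshold $-2\Re\lambda/|\lambda|^2$ would degenerate as $\mu\to0$, because then $\Re\lambda\to0$. It is also not uniform over an unbounded $\mu$-range. Both issues disappear here: the spectrum $\{\mu_k\}_{k\ge2}$ is a finite set bounded away from zero by the connectivity gap $\mu_2>0$, and there are only $d$ blocks. So the minimum in Step 2 and the maximum in Step 3 are taken over finite sets of positive, respectively finite, numbers.
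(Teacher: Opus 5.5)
Your proof is correct and follows the paper's route. Like the paper, you reduce to the $2\times 2$ blocks $\mathbf M_j(\mu_k)$, bound the spectral radius above using $|1+\Delta t\lambda|^2=1+2\Delta t\,\Re\lambda+\Delta t^2|\lambda|^2$ with a minimum over the finitely many non-synchronous modes (where $\mu_k>0$ by connectivity), and get the lower bound by letting $\Delta t\to 0$. Your details are slightly tighter than the paper's in three places: Routh--Hurwitz and the per-eigenvalue threshold $-2\Re\lambda/|\lambda|^2$ also cover real eigenvalue pairs, which occur when $\mu_k|C_R-C_F|>2\sqrt{\alpha_j\gamma_j}$ and where the paper's closed forms for $\Re\lambda$, $|\lambda|^2$ and $\Delta t_{j,\max}$ do not apply; the reverse triangle inequality gives an explicit $\Delta t^{(2)}=\varepsilon_{\text{MSF}}/\Lambda_{\max}$ instead of a continuity argument; and your factor $\tfrac12$ properly handles the closed endpoint $\Delta t=\Delta t_\varepsilon$.
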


The proofs are given in Appendix~\ref{app:proof-near-unit-radius}.

\subsection{Qualitative view  of stability-shaped propagation}
\label{sec:design-qual-grid}

To build intuition for the behaviors targeted by Regimes~I--III, we provide a grid visualization of deep propagation.
We construct a $300\times 300$ grid graph, initialize a scalar field with i.i.d.\ Gaussian noise, and apply $150$ propagation layers.
Fig.~\ref{fig:turing_oversmooth} compares diffusion-style GCN propagation with our SDGL-DI and SDGL-NC under a shared color scale.

We found that GCN rapidly collapses to a nearly constant field, reflecting over-smoothing under repeated diffusion.
Our SDGL-DI produces coherent spatial structures with sharp interfaces, consistent with a diffusion-induced instability that amplifies a band of modes into Turing-like patterns.
SDGL-NC exhibits a distinct stable regime: it retains a visible texture and non-trivial variation after many layers, while avoiding large-scale pattern formation.
Overall, this qualitative visualization aligns with our stability-shaping objectives and complements the mode-wise MSF analysis.

\subsection{Complexity analysis}
\subsubsection{Time complexity.} We compare SDGL with a standard GCN-style message passing baseline. Let $n=|\mathcal V|$ be the number of nodes, let $m=\mathrm{nnz}(\widetilde{\mathbf S})$ be the number of nonzeros in the sparse diffusion backbone $\widetilde{\mathbf S}$ (hence $\mathbf G=\widetilde{\mathbf S}-\mathbf I$), let $d$ be the hidden width per field, and let $L$ be the number of layers. A GCN layer is dominated by one sparse--dense multiplication $\widetilde{\mathbf S}\mathbf H^\ell$ with $\mathbf H^\ell\in\mathbb R^{n\times d}$, giving $O(md)$ time per layer and $O(Lmd)$ overall. Each SDGL layer evolves two fields $\mathbf R^\ell,\mathbf F^\ell\in\mathbb R^{n\times d}$ through coordinate-wise reactions ($O(nd)$) and two diffusion multiplications ($2\cdot O(md)$), yielding $O(2md+nd)=O(md)$ per layer on sparse graphs and $O(Lmd)$ over $L$ layers, i.e., the same linear scaling in $m$ as GCN with only a small constant-factor overhead from maintaining two coupled fields. Backpropagation follows the same asymptotic order.
\begin{figure}[t] 
\begin{center}
\includegraphics[width=1.0\linewidth]{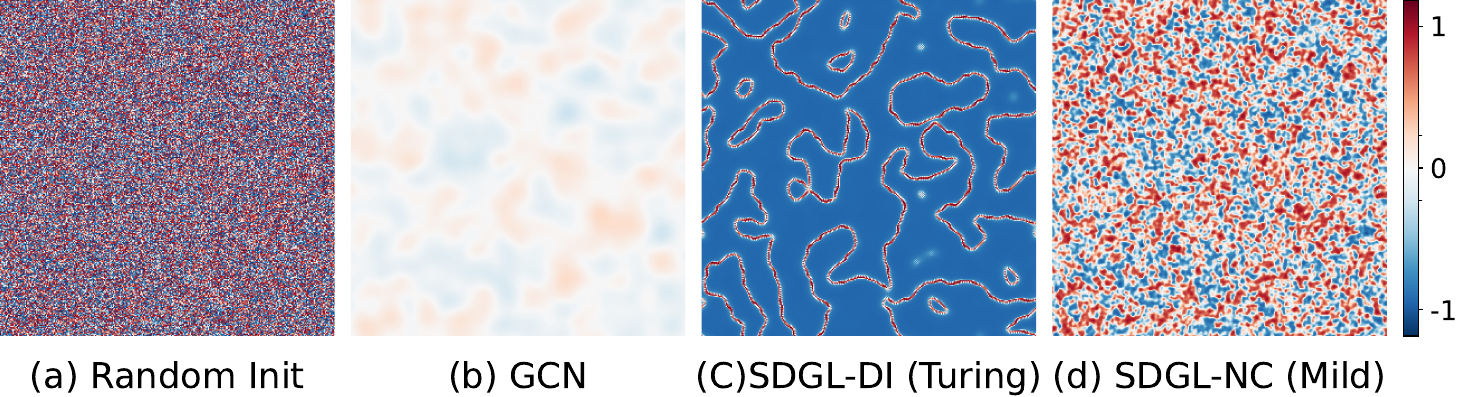}
\caption{Qualitative dynamics analysis.
Starting from random noise, we run $150$ layers and visualize the resulting scalar field with a shared color scale.
GCN collapses toward a near-constant field (over-smoothing).
Our SDGL-DI produces sharp Turing-like patterns via diffusion-induced instability.
SDGL-NC preserves non-trivial variation under stable near-critical propagation.}

\label{fig:turing_oversmooth}
\end{center}
\vskip -0.0in
\end{figure}

\subsubsection{Space complexity.}
GCN stores $L$ layer-wise node embeddings $\mathbf H^\ell\in\mathbb R^{n\times d}$ for backpropagation, hence activation memory $O(Lnd)$ (and $O(nd)$ for inference). SDGL maintains two coupled fields $\mathbf R^\ell,\mathbf F^\ell\in\mathbb R^{n\times d}$, so the activation memory becomes $O(2Lnd)=O(Lnd)$, i.e., a constant-factor ($\approx2\times$) overhead. In both cases, storing the sparse backbone $\widetilde{\mathbf S}$ requires $O(m)$ memory. Parameter memory is dominated by the input lift (e.g., $O(d_{\text{in}}d)$ for a linear map, up to constant factors for two heads), while reaction/diffusion parameters are coordinate-wise and contribute only $O(d)$ plus $O(1)$ scalars.

\section{Experiments}
We empirically validate SDGL from four complementary perspectives.
First, we evaluate whether stability-shaped propagation improves predictive performance on diverse real-world node-classification (Section~\ref{sec:results_node_classification}) and graph-level classification (Section~\ref{sec:results-graph-classification}) benchmarks.
Second, we test whether SDGL exhibits better depth scaling at the task level than diffusion-dominated baselines as the number of layers increases (Section~\ref{sec:depth-scaling}).
Third, we examine whether the observed empirical behavior is consistent with our theoretical picture through mechanism-oriented diagnostics, including Dirichlet energy evolution (Section~\ref{sec:dirichlet-energy}) and spectral validation of near-critical propagation (Section~\ref{sec:val-near-critical}).
Fourth, we assess whether SDGL remains parameter-efficient as depth increases (Section~\ref{sec:param_count}).

\subsection{Experimental setup}

\subsubsection{Datasets}
\textbf{Node classification.}
We evaluate SDGL on standard transductive node classification benchmarks covering both homophilic and heterophilic settings.
The homophilic datasets include the citation and co-purchase/co-author benchmarks Cora, CiteSeer, PubMed, Coauthor-CS, Computers, and Photo~\cite{sen2008collective,Zhu2025SDMG}.
The heterophilic datasets include Actor, Squirrel, and Chameleon~\cite{ding2025let}.
These benchmarks are particularly relevant to our study because, in heterophilic graphs, local neighbors are often not label-consistent, so shallow smoothing-based propagation can easily wash out useful distinctions.
Instead, informative signals may only become accessible through multi-hop propagation, which makes long-range dependency modeling a central challenge.
For these datasets, we follow the commonly used public splits and evaluation protocols adopted in prior work~\cite{ding2025let}.

\textbf{Graph classification with long-range dependencies.}
We further evaluate graph-level prediction on FakeNewsNet~\cite{dou2021user}, using its two standard subsets, \emph{PolitiFact} and \emph{GossipCop}.
In this setting, each sample consists of a news article, its associated social-context graph, and a binary real/fake label.
These benchmarks are particularly relevant to our study because useful evidence may be distributed across multiple hops in the propagation graph~\cite{zhu2024propagation}.
We follow~\cite{dou2021user} for preprocessing.
Table~\ref{tab:dataset-stats} summarizes the basic dataset statistics, while additional preprocessing details and hyperparameter disclosure are provided in Appendix~\ref{app:dataset-stats} and Appendix~\ref{sec:hyperparameter_disclosure}, respectively.
\begin{table*}[t]
\caption{Classification accuracy (\%) on homophilic datasets. We report mean $\pm$ std.
\textbf{Best} in bold, \underline{second-best} underlined. ``--'' denotes data that is out of memory or not included in the original paper}
\label{tab:homophilic-acc}
\vskip 0.15in
\begin{center}
\begin{small}
\begin{tabular}{lcccccc}
\toprule
\textbf{Dataset} & \textbf{Cora} & \textbf{CiteSeer} & \textbf{PubMed} & \textbf{Coauthor-CS} & \textbf{Computers} & \textbf{Photo} \\
\midrule
GAT &
$83.03\pm0.72$ & $72.53\pm0.54$ & $79.0\pm0.30$ & $89.51\pm0.54$ & $86.90\pm0.32$ & $92.56\pm0.40$ \\
GCN &
$81.53\pm0.53$ & $70.32\pm0.58$ & \underline{$79.04\pm0.37$} & $90.98\pm0.42$ & $86.49\pm0.51$ & $92.42\pm0.23$ \\
GraphSAGE &
$79.37\pm1.70$ & $67.31\pm1.63$ & $75.52\pm2.19$ & $90.62\pm0.42$ & $76.42\pm7.60$ & $88.71\pm2.68$ \\
SAN &
$77.98\pm1.31$ & $66.30\pm0.88$ & $74.68\pm0.81$ & $94.51\pm0.15$ & $89.93\pm0.16$ & $94.86\pm0.10$ \\
GTN &
$78.60\pm1.72$ & $68.80\pm1.69$ & $76.90\pm0.77$ & $-$ & $-$ & $-$ \\
\midrule
GRAND &
$82.83\pm1.31$ & $70.26\pm1.46$ & $78.89\pm1.96$ & $90.10\pm0.78$ & $82.79\pm0.54$ & $91.51\pm0.41$ \\
KuramotoGNN &
$81.16\pm1.61$ & $70.40\pm1.02$ & $78.69\pm1.91$ & $91.05\pm0.56$ & $80.06\pm1.60$ & $92.77\pm0.42$ \\
GraphCON &
$83.45\pm0.56$ & \bm{$73.94\pm1.63$} & $78.44\pm0.34$ & $90.30\pm0.74$ & $82.76\pm0.58$ & $91.78\pm0.50$ \\
BIMP &
$83.19\pm1.13$ & $71.09\pm1.40$ & \bm{$80.16\pm2.03$} & $92.48\pm0.26$ & $84.73\pm0.61$ & $92.90\pm0.44$ \\
\midrule
\textbf{SDGL-DI (Ours)} &
$\underline{83.21 \pm 1.48}$ & $70.03 \pm 1.32$ & $78.40 \pm 0.27$ & \underline{$94.62 \pm 0.67$} & \bm{$90.94 \pm 0.48$} & \underline{$95.03 \pm 0.77$} \\
\textbf{SDGL-NC (Ours)} &
$\bm{84.03 \pm 1.53}$ & \underline{$72.61 \pm 0.86$} & $78.52 \pm 0.86$ & \bm{$95.68 \pm 0.14$} & \underline{$90.39 \pm 0.25$} & \bm{$95.33 \pm 0.25$} \\
\bottomrule
\end{tabular}
\end{small}
\end{center}
\vskip -0.1in
\end{table*}

\begin{table}[t]
    \centering
    \caption{\textbf{Dataset statistics.}
    For node classification, we report the number of nodes (\#V), edges (\#E), input feature dimension, and classes.
    For FakeNewsNet graph classification, we report the corresponding statistics after preprocessing.}
    \label{tab:dataset-stats}
    \setlength{\tabcolsep}{5pt}
    \begin{tabular}{lrrrr}
        \toprule
        Dataset & \#V & \#E & \#Features & \#Classes \\
        \midrule
        \multicolumn{5}{l}{\textbf{Transductive node classification}}\\
        \midrule
        Cora              & 2{,}708  & 10{,}556  & 1{,}433 & 7 \\
        CiteSeer          & 3{,}327  & 9{,}104   & 3{,}703 & 6 \\
        PubMed            & 19{,}717 & 88{,}648  & 500     & 3 \\
        \midrule
        Amazon-Computers  & 13{,}752 & 491{,}722 & 767     & 10 \\
        Amazon-Photo      & 7{,}650  & 238{,}162 & 745     & 8 \\
        Coauthor-CS       & 18{,}333 & 163{,}788 & 6{,}805 & 15 \\
        \midrule
        Actor             & 7{,}600  & 27{,}544  & 931     & 5 \\
        Chameleon         & 2{,}277  & 38{,}328  & 2{,}325 & 5 \\
        Squirrel          & 5{,}201  & 217{,}073 & 2{,}089 & 5 \\
        \midrule
        \multicolumn{5}{l}{\textbf{Graph classification (FakeNewsNet)}}\\
        \midrule
        PolitiFact        & 41{,}054  & 40{,}740  & 768     & 2 \\
        GossipCop         & 314{,}262 & 308{,}798 & 768     & 2 \\
        \bottomrule
    \end{tabular}
\end{table}

\subsubsection{Baselines}
We compare SDGL against (i) \emph{heat-kernel-style message passing} GNNs, including GCN~\cite{GCN2017}, GAT~\cite{GAT2018}, and GraphSAGE~\cite{hamilton2017inductive}; (ii) \emph{Transformer-based} graph models that better capture long-range dependencies, including SAN~\cite{kreuzer2021rethinking} and GTN~\cite{yun2019graph}; and (iii) \emph{dynamics-inspired} deep propagation methods, including GRAND~\cite{chamberlain2021grand}, KuramotoGNN~\cite{nguyen2024coupled}, GraphCON~\cite{rusch2022graph}, and BIMP~\cite{wang2025resolving}. We also include GCNII~\cite{chen2020simple} as a residual-based deep GNN baseline. 

For FakeNewsNet graph classification, we follow prior protocols~\cite{dou2021user,zhu2024propagation} and compare with advanced baselines spanning two categories:
\textbf{(M1) content-based} models (CSI~\cite{CSI2017}, Bert-MLP~\cite{Bert2019}, SpaCy-MLP~\cite{Spacy2017}, UniPF~\cite{wei2022unified}, ChatGLM2-6B~\cite{zeng2023glmb});
and \textbf{(M2) propagation-structure-based} models, including long-sequence diffusion modeling methods (GCN~\cite{GCN2017}, GAT~\cite{GAT2018}, GraphGPS~\cite{GraphGPS2022}, BiGCN~\cite{BiGCN2020}, UPFD~\cite{UPFD2021}, EBGCN~\cite{EBGCN2021}, GSAT~\cite{GSAT2022}, GAMC~\cite{yin2024gamc}, PSGT~\cite{zhu2024propagation}).

\begin{table}[t]
\caption{Classification accuracy (\%) on heterophilic datasets. We report mean $\pm$ std.}
\label{tab:heterophilic-acc}
\vskip 0.15in
\centering
\setlength{\tabcolsep}{3pt}%
\resizebox{\columnwidth}{!}{%
\begin{tabular}{@{}lccc@{}}
\toprule
\textbf{Dataset} & \textbf{Actor} & \textbf{Squirrel} & \textbf{Chameleon} \\
\midrule
GAT &
$29.05\pm0.80$ & $30.08\pm1.03$ & $53.22\pm1.56$ \\
GCN &
$28.38\pm0.96$ & $28.87\pm1.55$ & $49.32\pm1.88$ \\
GraphSAGE &
$34.88\pm1.19$ & $36.90\pm1.03$ & $48.03\pm2.22$ \\
SAN &
$32.94\pm0.92$ & $37.00\pm1.28$ & $51.80\pm1.95$ \\
GTN &
$35.42\pm1.50$ & $36.45\pm1.72$ & $55.84\pm4.08$ \\
\midrule
GCNII &
$33.80\pm1.41$ & $38.74\pm1.20$ & $59.14\pm3.49$ \\
GRAND &
$33.34\pm1.16$ & $34.45\pm1.84$ & $38.43\pm3.33$ \\
KuramotoGNN &
$31.50\pm0.82$ & $35.02\pm1.67$ & $38.50\pm2.78$ \\
GraphCON &
$35.13\pm1.38$ & $26.90\pm2.17$ & $33.75\pm3.77$ \\
\midrule
\textbf{SDGL-DI (Ours)} &
\bm{$37.53\pm1.27$} & \underline{$41.21\pm1.17$}  & \underline{$60.31\pm1.53$}  \\
\textbf{SDGL-NC (Ours)} &
\underline{$37.02\pm1.07$} & \bm{$42.47\pm1.48$} & \bm{$61.62\pm1.88$} \\
\bottomrule
\end{tabular}%
}
\vskip -0.1in
\end{table}

\subsection{Results for node classification}
\label{sec:results_node_classification}

\subsubsection{Results on homophilic datasets.}
Table~\ref{tab:homophilic-acc} reports accuracy on six standard homophilic benchmarks.
Generally, both SDGL variants are competitive with, and often superior to, strong message-passing, Transformer-based, and dynamics-inspired baselines.
Among the two variants, SDGL-NC performs particularly well on homophilic graphs, achieving the highest accuracy on Cora, Coauthor-CS, and Photo, while SDGL-DI attains the best result on Computers and remains competitive on the remaining datasets.
These results are consistent with our stability perspective.
On homophilic graphs, informative signals are often concentrated in smoother components, so stable, near-critical propagation is especially beneficial because it preserves useful low-frequency variation without incurring the strong contraction associated with diffusion-dominated propagation.

\subsubsection{Results on heterophilic datasets.}
Table~\ref{tab:heterophilic-acc} summarizes the results on three widely used heterophilic benchmarks.
Our SDGL variants achieve the best performance on all three datasets, with SDGL-DI performing best on Actor and SDGL-NC performing best on Squirrel and Chameleon.
In particular, SDGL consistently improves over diffusion-dominated baselines as well as over deep residual propagation (GCNII) and prior dynamics-inspired models.
This behavior is again well aligned with our theory.
When labels are weakly aligned with graph structure, naive smoothing can easily mix incompatible neighborhood information and suppress informative non-synchronous variation.
In this regime, explicit control of mode-wise stability becomes particularly important, either by preserving variation under near-critical propagation or by avoiding excessive contraction through stability shaping.

In general, the node classification results show that SDGL is effective across both homophilic and heterophilic settings, while the relative strengths of SDGL-NC and SDGL-DI follow the expected propagation regimes.
The gains are most pronounced when deeper propagation is needed or when naive diffusion-style smoothing is especially harmful, which supports the central claim of this work: controlling mode-wise stability improves deep graph learning.
\begin{table}
\centering
  \caption{Performance comparison on \underline{PolitiFact} (M1: Content-based, M2: Propagation-structure-based methods).}
  \label{ta_p_overreusult}
  \begin{tabular}{l|lcccc}
    \toprule
     & \textbf{Methods} & \textbf{ACC} & \textbf{Pre} & \textbf{Rec} & \textbf{F1} \\
    \midrule
    \multirow{5}{*}{M1} & CSI & 0.734 & 0.672 & 0.550 & 0.688 \\
                        & Bert-MLP & 0.853 & 0.892 & 0.833 & 0.825 \\
                        & SpaCy-MLP & 0.375 & 0.124 & 0.136 & 0.272 \\
                        & UniPF & 0.760 & 0.783 & 0.800 & 0.754 \\
                        & ChatGLM2-6B & 0.856 & 0.847 & 0.856 & 0.850 \\   
    \cmidrule(lr){1-6}
    \multirow{8}{*}{M2} & GCN & 0.833 & \underline{0.958} & 0.766 & 0.831 \\
                        & GAT & 0.895 & 0.930 & 0.900 & 0.890 \\
                        & GraphGPS & 0.903 & 0.900 & 0.818 & 0.857 \\
                        & BiGCN & 0.854 & \textbf{0.960} & 0.800 & 0.851 \\
                        & UPFD & 0.875 & 0.900 & 0.900 & 0.866 \\
                        & EBGCN & 0.896 & 0.898 & 0.909 & 0.891 \\
                        & GSAT & 0.854 & 0.947 & 0.750 & 0.852 \\
                        & GAMC & 0.838 & 0.836 & 0.827 & 0.831 \\
                        
    \cmidrule(lr){1-6}
    \multirow{2}{*}{Ours} & \textbf{SDGL-DI} & \textbf{0.921} & 0.829 & \textbf{0.963} & \textbf{0.914} \\
     & \textbf{SDGL-NC} & \underline{0.904} & 0.815 & \underline{0.925} & \underline{0.896} \\
     
    \bottomrule
  \end{tabular}
\end{table}

\begin{table}
\centering
  \caption{Performance comparison on \underline{GossipCop} (M1: Content-based, M2: Propagation-structure-based methods).}
  \label{tab:gossipcop}
  \begin{tabular}{l|lcccc}
    \toprule
     & \textbf{Methods} & \textbf{ACC} & \textbf{Pre} & \textbf{Rec} & \textbf{F1} \\
    \midrule
    \multirow{5}{*}{M1} & CSI & 0.866 & 0.892 & 0.840 & 0.866 \\
                        & Bert-MLP & 0.962 & 0.968 & 0.954 & 0.961 \\
                        & SpaCy-MLP & 0.501 & 0.498 & 0.534 & 0.333 \\
                        & UniPF & 0.933 & 0.932 & 0.933 & 0.932 \\
                        & ChatGLM2-6B & 0.892 & 0.902 & 0.892 & 0.892 \\     
    \cmidrule(lr){1-6}
    \multirow{8}{*}{M2} & GCN & 0.957 & 0.931 & \textbf{0.984} & 0.957 \\
                        & GAT & 0.961 & 0.945 & 0.971 & 0.960 \\
                        & GraphGPS & 0.968 & \textbf{0.972} & 0.952 & 0.960 \\
                        & BiGCN & 0.951 & 0.923 & \underline{0.982} & 0.953 \\
                        & UPFD & 0.965 & 0.972 & 0.957 & 0.965 \\
                        & EBGCN & 0.964 & 0.966 & 0.962 & 0.963 \\
                        & GSAT & 0.956 & 0.949 & 0.962 & 0.955 \\
                        & GAMC & 0.946 & 0.941 & 0.946 & 0.943 \\
    \cmidrule(lr){1-6}
    \multirow{2}{*}{Ours} & \textbf{SDGL-DI} & \textbf{0.970} & \underline{0.969} & 0.969 & \textbf{0.970} \\
    & \textbf{SDGL-NC} & \underline{0.968} & \underline{0.969} & 0.966 & \underline{0.968} \\
    \bottomrule
  \end{tabular}
\end{table}

\subsection{Results for graph classification}
\label{sec:results-graph-classification}

We evaluate graph classification with long-range dependencies on the two FakeNewsNet subsets, \emph{PolitiFact} and \emph{GossipCop}, where each sample consists of a news item, its associated social-context graph, and a binary real/fake label.
Tables~\ref{ta_p_overreusult} and~\ref{tab:gossipcop} report the results on PolitiFact and GossipCop, respectively.
Overall, SDGL performs on both subsets, supporting our claim that stability-shaped deep propagation is beneficial when useful evidence is distributed over long propagation paths.

On \emph{PolitiFact}, SDGL-DI achieves the best overall performance, attaining the highest accuracy and F1 score among all compared methods, while SDGL-NC also remains competitive.
On \emph{GossipCop}, SDGL-DI again achieves the best accuracy and F1 score, and SDGL-NC matches the strongest baselines up to a very small margin.
Importantly, both SDGL variants outperform content-based baselines, which indicates that the social-context graph provides complementary information beyond text alone.
They also remain superior or highly competitive against strong propagation-structure-based methods, including GraphGPS, BiGCN, EBGCN, and UPFD.
These results suggest that the gain does not come merely from using graph propagation, but from controlling how deep propagation preserves informative variation over multiple hops.
In general, the results on PolitiFact and GossipCop demonstrate that SDGL is effective not only on node-level benchmarks but also on realistic graph-level tasks with long-range dependencies.

\subsection{Depth scaling with increasing layer number}
\label{sec:depth-scaling}

A central motivation of this work is to enable deep propagation without the severe performance degradation commonly observed in standard message-passing GNNs.
To evaluate this claim directly at the task level, we vary the number of layers and measure node classification accuracy on two representative benchmarks: Photo, a homophilic dataset and Chameleon, a heterophilic dataset.
For each depth, we report the mean and standard deviation over repeated runs.

\begin{figure}[t]
\begin{center}
\includegraphics[width=1.0\linewidth]{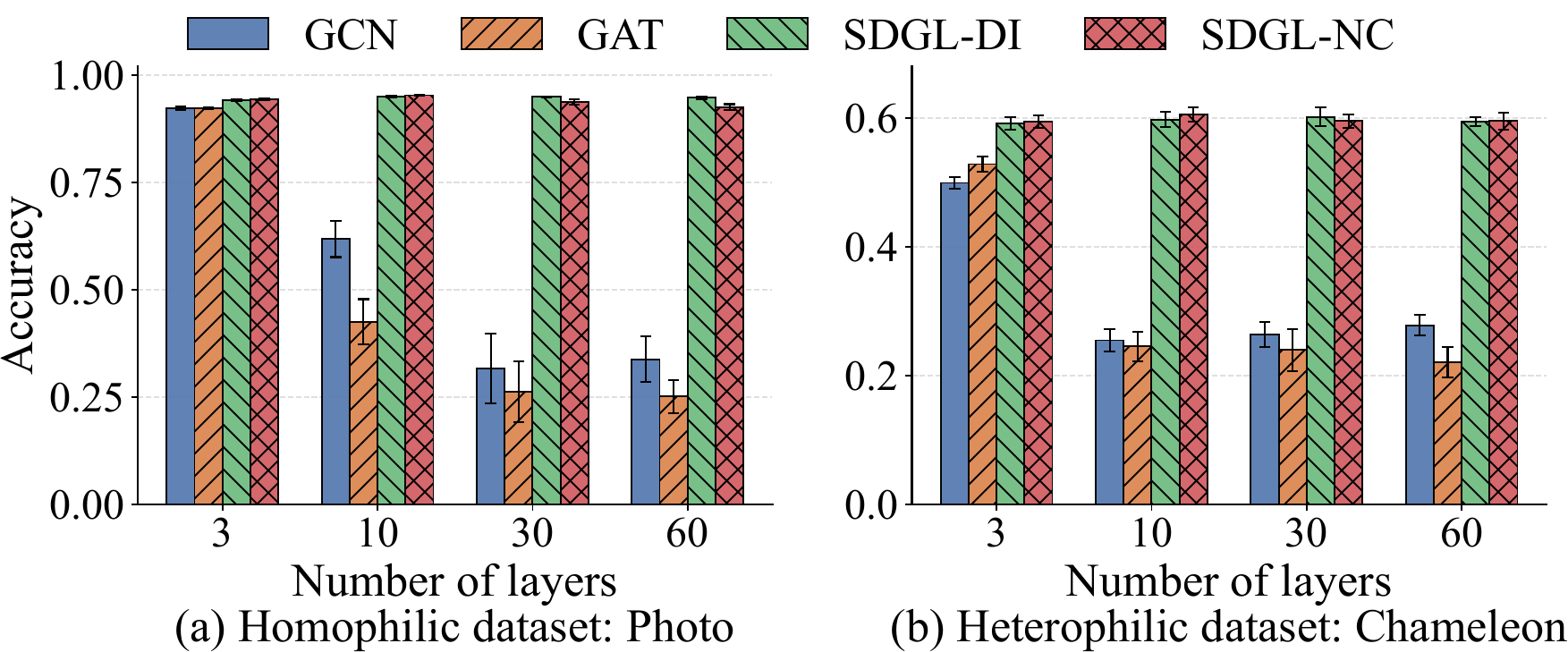}
\caption{Accuracy as a function of layer number on two representative node classification benchmarks.
\textbf{(a)} Homophilic dataset: Photo.
\textbf{(b)} Heterophilic dataset: Chameleon.
GCN and GAT degrade substantially as depth increases, whereas both SDGL variants remain stable and competitive even at 30--60 layers.
This supports our claim that stability shaping improves depth scaling under both homophilic and heterophilic graph conditions.}
\label{fig:acc_layers}
\end{center}
\vskip -0.1in
\end{figure}

Fig.~\ref{fig:acc_layers} shows a clear separation in depth behavior between standard diffusion-dominated baselines and our stability-shaped models.
On both datasets, GCN and GAT deteriorate rapidly as the number of layers increases, which is consistent with depth-induced degradation and over-smoothing.
In contrast, both SDGL-DI and SDGL-NC remain stable across depths and preserve strong predictive performance even in the 30- and 60-layer regimes. The effect is especially pronounced on the homophilic Photo benchmark.
While GCN and GAT suffer a severe drop in accuracy once the model becomes deep, both SDGL variants maintain near-saturated performance across all tested depths.
This indicates that stability shaping effectively prevents the strong collapse associated with repeated diffusion-style propagation.

A similar trend appears on the heterophilic Chameleon benchmark, where naive smoothing is often even more harmful because neighboring nodes may carry incompatible label information.
Again, the two SDGL variants substantially outperform GCN and GAT at larger depths and exhibit much smaller performance decay as the number of layers grows.
This suggests that explicit control of mode-wise stability improves not only representation preservation in homophilic graphs but also robustness to harmful mixing in heterophilic settings. Overall, these results provide direct task-level evidence that the proposed stability shaping translates into improved depth scaling.
They complement the main classification tables and are further supported by the mechanism-oriented diagnostics in the following subsections, including Dirichlet energy evolution and spectral validation of near-critical propagation.

\begin{figure}[th] 
\vskip -0.0in
\begin{center}
\includegraphics[width=0.9\linewidth]{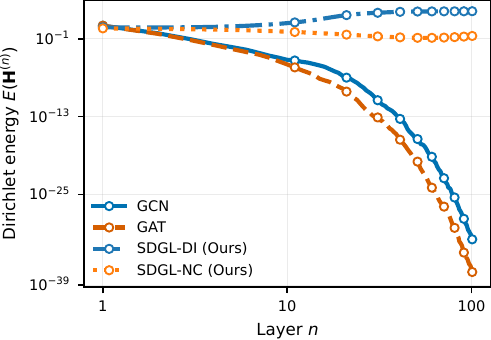}
\caption{Depth evolution of Dirichlet energy $E(\mathbf H^{(n)})$ in log scale from random initialization over 100 layers.
GCN and GAT rapidly drive $E(\mathbf H^{(n)})$ toward $0$, indicating over-smoothing, while our SDGL-NC keeps $E$ nearly constant and our SDGL-DI increases $E$ and then levels off, consistent with near critical propagation and band-limited amplification.}
\label{fig:dirichlet_energy}
\end{center}
\vskip -0.0in
\end{figure}
\subsection{Depth and evolution of Dirichlet energy}
\label{sec:dirichlet-energy}

We use Dirichlet energy to quantify how node representations evolve with depth and to connect the empirical behavior to our stability view of over-smoothing.
Diffusion-dominated propagation is expected to rapidly contract non-synchronous components and drive representations toward the synchronized manifold.
By design, our SDGL-DI targets band-limited amplification in Regime~II, while SDGL-NC targets stable near-critical propagation in Regime~III, so deep propagation does not collapse. 
We quantify smoothing by the normalized Dirichlet energy of node representations,
\begin{equation}
E(\mathbf H^{(n)})
:=
\frac12\sum_{(i,j)\in\mathcal E}\widetilde A_{ij}
\|\mathbf h_i^{(n)}-\mathbf h_j^{(n)}\|_2^2,
\label{eq:dirichlet-energy-main}
\end{equation}
where $\widetilde A_{ij}$ is the normalized edge weight associated with the diffusion backbone.
Equivalently, $E(\mathbf H^{(n)})=\mathrm{tr}((\mathbf H^{(n)})^\top\widetilde{\mathbf L}\mathbf H^{(n)})$ up to the same normalization.
Smaller energy indicates smoother representations, and decay toward zero indicates collapse toward synchrony.
Following GraphCON~\cite{rusch2022graph}, we evaluate untrained deep stacks on a $10\times10$ grid graph with random initialization for $L=100$ layers and record $E(\mathbf H^{(n)})$ for $n=1,\dots,L$.

Fig.~\ref{fig:dirichlet_energy} shows three Dirichlet energy evolution behaviors that match our theory.
(1) GCN and GAT exhibit a rapid decay of Dirichlet energy, indicating strong contraction of non-synchronous components and classical over-smoothing in Regime~I. 
(2) SDGL-NC keeps the energy close to its initial scale with only a mild decay, consistent with stable near-critical propagation in Regime~III. 
(3) SDGL-DI increases the energy and then saturates, consistent with diffusion-induced instability that amplifies a band of graph modes in Regime~II. 
Overall, these trajectories support stability shaping as a mechanism for preventing depth-induced collapse, either by preserving variation in SDGL-NC or by selectively amplifying modes in SDGL-DI.

\begin{figure}[th] 
\vskip -0.0in
\begin{center}
\includegraphics[width=1.0\linewidth]{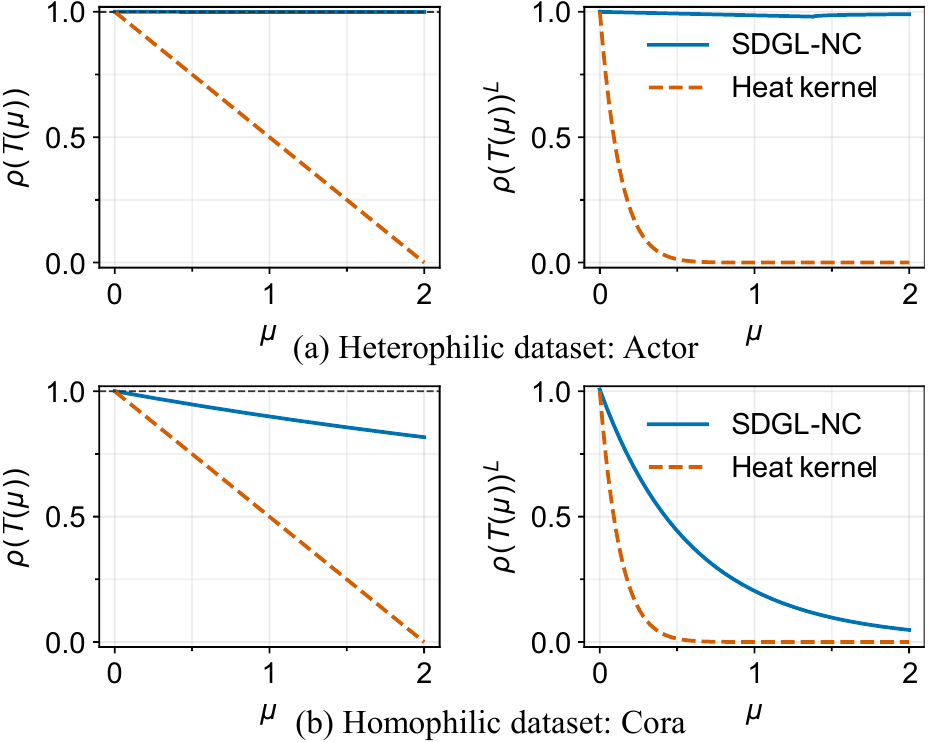 }
\caption{Validation of near critical propagation.
Spectral gain profiles of our SDGL-NC compared with a diffusion-only heat kernel on Actor and Cora.
Left panels plot the spectral radius $\rho(T(\mu))$, and right panels plot the depth-amplified gain $\rho(T(\mu))^{L}$ with $L=15$.
Our SDGL-NC stays closer to the critical boundary $\rho=1$ and retains substantially larger deep gain across frequencies, while the heat kernel rapidly suppresses nontrivial modes.}

\label{fig:nc_validation}
\end{center}
\vskip -0.2in
\end{figure}

\subsection{Validation of near-critical propagation}
\label{sec:val-near-critical}

We validate near-critical propagation by measuring the learned spectral gain of our SDGL-NC.
Following the linearized analysis, we construct the per-frequency transfer map $T(\mu)$ from the trained reaction and diffusion parameters and evaluate its spectral radius $\rho\!\left(T(\mu)\right)$ over $\mu\in[0,2]$.
To reflect depth effects, we also report the depth-amplified gain $\rho\!\left(T(\mu)\right)^{L}$ with $L$ equal to the network depth.
As a diffusion-only baseline, we include heat-kernel propagation.

Fig.~\ref{fig:nc_validation} shows a clear separation between SDGL-NC and the heat kernel on both Actor and Cora.
On Actor, $\rho\!\left(T(\mu)\right)$ stays close to $1$ across the spectrum. Hence $\rho\!\left(T(\mu)\right)^{L}$ remains near $1$ even at large depth.
This indicates stable but weakly contractive dynamics that preserve a broad range of spectral components rather than rapidly collapsing them.
In contrast, the heat kernel strongly attenuates mid- and high-frequency, and repeated propagation drives $\rho\!\left(T(\mu)\right)^{L}$ close to zero across most of the spectrum.

On Cora, SDGL-NC becomes more contractive as $\mu$ increases, yielding a stronger bias toward low-frequency components.
This matches the homophilic setting, where smoother signals tend to be more predictive, and aligns with prior findings on the utility of low-frequency information for homophilic graphs~\cite{Zhu2025SDMG}.
Importantly, SDGL-NC still maintains a substantially larger gain than the heat kernel across a wide range of $\mu$. Hence $\rho\!\left(T(\mu)\right)^{L}$ decays far more slowly, supporting deep propagation without over-smoothing.

\subsection{Parameter count scaling with depth}
\label{sec:param_count}

Although SDGL maintains two coupled fields and outputs a $2d$-dimensional representation, its propagation core remains parameter-efficient. In particular, each layer introduces only a small number of channel-wise reaction parameters together with a few diffusion scalars, whereas the dominant components of the model, namely the input lifting layers and the output classifier, are independent of depth. As a result, increasing depth has only a limited effect on the total parameter count. To quantify this effect, we compare the number of trainable parameters under varying depths $L\in\{3,10,30,60\}$ on the Photo dataset. For all models, we fix the hidden width to $128$ (and use $8$ attention heads for GAT) and vary only the number of propagation layers.

Table~\ref{tab:param_count_depth} shows that the parameter count of SDGL grows only mildly with depth. From $L=3$ to $L=60$, the total number of parameters increases from $0.196$M to $0.255$M for SDGL-NC and from $0.195$M to $0.239$M for SDGL-DI. In contrast, GCN and GAT exhibit much steeper growth because they require layer-specific weight matrices, and in the case of GAT, repeated multi-head projections. This difference becomes substantial in the deep regime. At $60$ layers, SDGL-NC and SDGL-DI use $4.14\times$ and $4.40\times$ fewer parameters than GCN, respectively, and $242\times$ and $258\times$ fewer parameters than GAT.

It is worth noting that at very small depth, such as $L=3$, GCN may have fewer parameters because SDGL incurs an upfront cost to lift the input into two interacting fields. However, this overhead is nearly depth-independent, and its relative impact quickly diminishes as the number of layers increases. These results indicate that SDGL supports substantially deeper propagation without the parameter growth typically associated with conventional deep message-passing architectures.

\begin{table}[t]
\centering
\caption{Trainable parameter counts (in millions) as a function of depth on Photo.
For all models, we fix the hidden width (and the number of attention heads for GAT) and vary only the number of propagation layers.}
\label{tab:param_count_depth}
\setlength{\tabcolsep}{6pt}
\begin{tabular}{rcccc}
\toprule
Layers & SDGL-NC (Ours) & SDGL-DI  (Ours)& GCN & GAT \\
\midrule
3  & 0.196 & 0.195 & 0.113 & 1.826 \\
10 & 0.203 & 0.201 & 0.229 & 9.187 \\
30 & 0.224 & 0.216 & 0.559 & 30.220 \\
60 & 0.255 & 0.239 & 1.054 & 61.770 \\
\bottomrule
\end{tabular}
\vskip -0.1in
\end{table}

\section{Conclusion}
We present SDGL, a stability-shaped approach to deep graph learning that turns over-smoothing into a mode-wise stability criterion around the synchronized manifold. By decoupling graph topology from local nonlinearities through a master-stability decomposition, we obtain a compact stability curve that predicts whether non-synchronous components will decay, persist, or grow with depth. This perspective yields two practical architectures with complementary behaviors: diffusion-induced band-limited amplification for pattern formation and stable near-critical propagation for deep models without collapse. Across homophilic and heterophilic node classification and long-range graph classification, SDGL consistently improves depth scaling and accuracy, and targeted diagnostics on Dirichlet energy, spectral gain, and grid dynamics align with the theory. These results suggest that stability shaping provides a principled and transferable approach for designing more efficient and accurate deep GNN propagation.

\ifCLASSOPTIONcaptionsoff
  \newpage
\fi

\bibliographystyle{IEEEtran}
\bibliography{Reference}

\newpage
\appendices
\section{Derivations}

\subsection{Derivation of the variational system in Equation~\eqref{eq:variational-main-rewrite}}
\label{app:msf-variational}
We derive the linearized variational system
\[
\frac{\mathrm d\,\Delta \mathbf H}{\mathrm dt}
=
\bigl(\mathbf I_n\otimes \mathbf J + \mathbf G\otimes \mathbf C\bigr)\,\Delta \mathbf H,
\qquad
\mathbf J:=\nabla f(\mathbf h^\star)\in\mathbb R^{q\times q},
\]
starting from the nodewise dynamics in Equation~\eqref{eq:rd-first}.

Recall the nodewise ODE
\begin{equation}
\frac{\mathrm d \mathbf h_i}{\mathrm dt}
=
f(\mathbf h_i)
+
\sum_{j=1}^n G_{ij}\,\mathbf C\,\mathbf h_j,
\qquad i=1,\dots,n,
\label{eq:node-ode-app}
\end{equation}
with node states $\mathbf h_i(t)\in\mathbb R^{q}$ and shared reaction $f\colon\mathbb R^{q}\to\mathbb R^{q}$.
Stack the node states and reactions as
\begin{equation}
\mathbf H
=
\begin{bmatrix}
\mathbf h_1\\[-1pt]
\vdots\\[-1pt]
\mathbf h_n
\end{bmatrix}
\in\mathbb R^{nq},
\qquad
\mathbf F(\mathbf H)
=
\begin{bmatrix}
f(\mathbf h_1)\\[-1pt]
\vdots\\[-1pt]
f(\mathbf h_n)
\end{bmatrix}
\in\mathbb R^{nq},
\end{equation}
and note that, by the Kronecker product identity,
\begin{equation}
(\mathbf G\otimes \mathbf C)\,\mathbf H
=
\begin{bmatrix}
\sum_{j} G_{1j}\,\mathbf C\,\mathbf h_j\\[-1pt]
\vdots\\[-1pt]
\sum_{j} G_{nj}\,\mathbf C\,\mathbf h_j
\end{bmatrix}.
\label{eq:GCY-block}
\end{equation}
The stacked dynamics can therefore be written as
\begin{equation}
\frac{\mathrm d \mathbf H}{\mathrm dt}
=
\mathbf F(\mathbf H)
+
(\mathbf G\otimes \mathbf C)\,\mathbf H .
\label{eq:stacked-ode-app}
\end{equation}

Assume there exists a nodewise equilibrium $\mathbf h^\star\in\mathbb R^{q}$ with
$f(\mathbf h^\star)=\mathbf 0$.
The corresponding synchronous state is
\begin{equation}
\mathbf H^\star = \mathbf 1\otimes \mathbf h^\star \in\mathbb R^{nq}.
\end{equation}
By definition of $\mathbf F$ we have $\mathbf F(\mathbf H^\star)=\mathbf 0$.
If moreover $\mathbf G\mathbf 1=\mathbf 0$ then
\begin{equation}
(\mathbf G\otimes \mathbf C)\,\mathbf H^\star
=
(\mathbf G\mathbf 1)\otimes(\mathbf C\mathbf h^\star)
=
\mathbf 0,
\end{equation}
so $\mathbf H^\star$ is an equilibrium of Equation~\eqref{eq:stacked-ode-app}.
Define perturbations around synchrony by 
$\Delta \mathbf H = \mathbf H - \mathbf H^\star$ .

We now linearize Equation~\eqref{eq:stacked-ode-app} around $\mathbf H^\star$:
\begin{align}
\frac{\mathrm d\,\Delta \mathbf H}{\mathrm dt}
&=
\frac{\mathrm d \mathbf H}{\mathrm dt} - \frac{\mathrm d \mathbf H^\star}{\mathrm dt}
\\
&=
\mathbf F(\mathbf H) + (\mathbf G\otimes \mathbf C)\,\mathbf H
-
\mathbf F(\mathbf H^\star) - (\mathbf G\otimes \mathbf C)\,\mathbf H^\star
\\
&=
\mathbf F(\mathbf H^\star + \Delta \mathbf H)
+
(\mathbf G\otimes \mathbf C)\,(\mathbf H^\star + \Delta \mathbf H)\\
&-
\mathbf F(\mathbf H^\star) - (\mathbf G\otimes \mathbf C)\,\mathbf H^\star
\\
&=
\mathbf F(\mathbf H^\star + \Delta \mathbf H)
-
\mathbf F(\mathbf H^\star)
+
(\mathbf G\otimes \mathbf C)\,\Delta \mathbf H
\\
&=
\mathbf F(\mathbf H^\star)
+
\nabla \mathbf F(\mathbf H^\star)\,\Delta \mathbf H
+
\mathcal O(\|\Delta \mathbf H\|^2) \\
&-
\mathbf F(\mathbf H^\star)
+
(\mathbf G\otimes \mathbf C)\,\Delta \mathbf H
\label{eq:deltaY-taylor-step}
\\
&=
\nabla \mathbf F(\mathbf H^\star)\,\Delta \mathbf H
+
\mathcal O(\|\Delta \mathbf H\|^2)
+
(\mathbf G\otimes \mathbf C)\,\Delta \mathbf H
\\
&\approx
\nabla \mathbf F(\mathbf H^\star)\,\Delta \mathbf H
+
(\mathbf G\otimes \mathbf C)\,\Delta \mathbf H
\\
&=
(\mathbf I_n\otimes \mathbf J)\,\Delta \mathbf H
+
(\mathbf G\otimes \mathbf C)\,\Delta \mathbf H
\\
&=
\bigl(\mathbf I_n\otimes \mathbf J + \mathbf G\otimes \mathbf C\bigr)\,\Delta \mathbf H .
\label{eq:deltaY-ode-start}
\end{align}
Here Equation~\eqref{eq:deltaY-taylor-step} uses a first-order Taylor expansion of $\mathbf F$ at $\mathbf H^\star$,
and we keep only the linear terms in $\Delta \mathbf H$.
Since $\mathbf F$ acts nodewise, its Jacobian at $\mathbf H^\star$ is block diagonal with
copies of the single-node Jacobian $\mathbf J=\nabla f(\mathbf h^\star)$, which gives
$\nabla \mathbf F(\mathbf H^\star)=\mathbf I_n\otimes \mathbf J$.
Equation~\eqref{eq:deltaY-ode-start} is exactly the variational system in
Equation~\eqref{eq:variational-main-rewrite}.

\subsection{Spectral Equivalence of Random-Walk and Symmetric Diffusion}
\label{app:lem:norm-vs-rw-main}
\begin{lemma}[Random-walk and symmetric diffusion are spectrally equivalent]
\label{lem:norm-vs-rw-main}
The symmetric normalized Laplacian $\widetilde{\mathbf L}_{\mathrm{sym}}$ and the random-walk Laplacian $\widetilde{\mathbf L}_{\mathrm{rw}}$ are similar and therefore share the same eigenvalues.
Under the node-wise change of variables $\mathbf z=\mathbf D^{-1/2}\mathbf r$, diffusion driven by $\widetilde{\mathbf L}_{\mathrm{sym}}$ is mapped to diffusion driven by $\widetilde{\mathbf L}_{\mathrm{rw}}$.
In the transformed coordinates, the synchronous manifold is uniform ($\mathbf z_i\equiv c$ for all $i$), and the master stability analysis depends on the same spectrum as in the original coordinates.
\end{lemma}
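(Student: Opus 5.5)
The plan is to write the similarity explicitly and then carry it through the Kronecker-structured variational system from Theorem~\ref{thm:general-msf}. Assume a graph without isolated nodes, so that $\mathbf D$ is invertible. The starting identity is
\[
\widetilde{\mathbf S}_{\mathrm{rw}}=\mathbf D^{-1}\mathbf A=\mathbf D^{-1/2}\bigl(\mathbf D^{-1/2}\mathbf A\mathbf D^{-1/2}\bigr)\mathbf D^{1/2}=\mathbf D^{-1/2}\widetilde{\mathbf S}_{\mathrm{sym}}\mathbf D^{1/2}.
\]
It gives $\widetilde{\mathbf L}_{\mathrm{rw}}=\mathbf I_n-\widetilde{\mathbf S}_{\mathrm{rw}}=\mathbf D^{-1/2}\widetilde{\mathbf L}_{\mathrm{sym}}\mathbf D^{1/2}$. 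Similar matrices share characteristic polynomials, so the two Laplacians have the same eigenvalues $\{\mu_k\}$. Since $\widetilde{\mathbf L}_{\mathrm{sym}}$ is symmetric, both are diagonalizable with real nonnegative spectrum. The eigenvectors correspond via $\mathbf u^{\mathrm{rw}}=\mathbf D^{-1/2}\mathbf u^{\mathrm{sym}}$. In particular, $\mathbf u_1^{\mathrm{sym}}\propto\mathbf D^{1/2}\mathbf 1$ is mapped to $\mathbf 1$.

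Next I would transfer the dynamics. Consider diffusion $\dot{\mathbf r}=-\widetilde{\mathbf L}_{\mathrm{sym}}\mathbf r$, or the full coupled form with $\mathbf G_{\mathrm{sym}}\otimes\mathbf C$. Set $\mathbf z=(\mathbf D^{-1/2}\otimes\mathbf I_q)\mathbf r$. Because $\mathbf D$ is constant in time, $\dot{\mathbf z}=(\mathbf D^{-1/2}\otimes\mathbf I_q)\dot{\mathbf r}$. The mixed-product rule then gives
\[
(\mathbf D^{-1/2}\otimes\mathbf I_q)(\mathbf G_{\mathrm{sym}}\otimes\mathbf C)(\mathbf D^{1/2}\otimes\mathbf I_q)=\mathbf G_{\mathrm{rw}}\otimes\mathbf C,
\]
so the transformed equation is driven by $\widetilde{\mathbf L}_{\mathrm{rw}}$. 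The fixed set of $\mathbf G_{\mathrm{sym}}$ is $\{\mathbf D^{1/2}\mathbf 1\otimes\mathbf h\}$. It maps to $\{\mathbf 1\otimes\mathbf h\}$, i.e.\ $\mathbf z_i\equiv c$, because $\mathbf G_{\mathrm{rw}}\mathbf 1=\mathbf 0$.

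For the master stability claim, I would observe that the variational operator is $\mathbf I_n\otimes\mathbf J+\mathbf G\otimes\mathbf C$. Conjugation by $\mathbf D^{\pm1/2}\otimes\mathbf I_q$ leaves $\mathbf I_n\otimes\mathbf J$ unchanged, since $\mathbf D^{-1/2}\mathbf I_n\mathbf D^{1/2}=\mathbf I_n$, and it maps the $\mathbf G_{\mathrm{sym}}$ term to the $\mathbf G_{\mathrm{rw}}$ term. The two linear systems are therefore similar. Applying Theorem~\ref{thm:general-msf} with $\mathbf U_{\mathrm{rw}}=\mathbf D^{-1/2}\mathbf U_{\mathrm{sym}}$ yields literally the same modal coordinates $\boldsymbol\xi_k$ and the same blocks $\mathbf J-\mu_k\mathbf C$. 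Hence $\Lambda(\mu_k)$ coincides in both coordinate systems.

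The one delicate point is the linearization. For the purely linear diffusion part, the change of variables is exact. The nonlinear reaction, however, is only conjugated correctly at the level of the Jacobian: $f(\mathbf D^{1/2}\mathbf z)$ is not $\mathbf D^{1/2}f(\mathbf z)$ in general. I would therefore state the equivalence for the variational (linearized) system, which is what the master stability analysis uses. I would also note that nonuniform states in $\mathbf r$-coordinates correspond to uniform ones in $\mathbf z$. The corresponding remark is that the relevant ``synchronous'' subspace for the symmetric backbone is $\mathbf D^{1/2}\mathbf 1\otimes\mathbb R^q$, spanned by mode $k=1$, so the non-synchronous support is again modes $k\ge2$.
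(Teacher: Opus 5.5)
Your proposal is correct and follows the paper's proof: the same similarity $\widetilde{\mathbf L}_{\mathrm{rw}}=\mathbf D^{-1/2}\widetilde{\mathbf L}_{\mathrm{sym}}\mathbf D^{1/2}$, the same Kronecker change of variables by $\mathbf D^{-1/2}\otimes\mathbf I_q$, and the same correspondence of null vectors $\mathbf D^{1/2}\mathbf 1\mapsto\mathbf 1$. You go further than the paper, which treats only pure diffusion and infers the master-stability claim from equal spectra, in two ways: you conjugate the full variational operator $\mathbf I_n\otimes\mathbf J+\mathbf G\otimes\mathbf C$, so the modal coordinates $\boldsymbol\xi_k$ coincide exactly, and you note that only the linearization transfers, not the nonlinear reaction; this implicitly needs a linearization point with node-independent Jacobian, such as $\mathbf H^\star=\mathbf 0$ in shifted coordinates.
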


In practice, Lemma~\ref{lem:norm-vs-rw-main} means that choosing symmetric or random-walk normalization for the diffusion backbone does not affect our MSF-based conclusions: both normalizations induce the same diffusion eigenvalues, up to a degree reweighting of node features.

\begin{proof}
\textbf{(i) Similarity of $\widetilde{\mathbf L}_{\mathrm{sym}}$ and $\widetilde{\mathbf L}_{\mathrm{rw}}$.}
Recall
\(
\widetilde{\mathbf L}_{\mathrm{sym}}=\mathbf I-\mathbf D^{-1/2}\mathbf A\mathbf D^{-1/2}
\)
and
\(
\widetilde{\mathbf L}_{\mathrm{rw}}=\mathbf I-\mathbf D^{-1}\mathbf A.
\)
We compute
\begin{align}
\widetilde{\mathbf L}_{\mathrm{rw}}
&= \mathbf I - \mathbf D^{-1}\mathbf A \\
&= \mathbf I - \mathbf D^{-1/2}\bigl(\mathbf D^{-1/2}\mathbf A\mathbf D^{-1/2}\bigr)\mathbf D^{1/2} \\
&= \mathbf D^{-1/2}\Bigl(\mathbf I - \mathbf D^{-1/2}\mathbf A\mathbf D^{-1/2}\Bigr)\mathbf D^{1/2} \\
&= \mathbf D^{-1/2}\,\widetilde{\mathbf L}_{\mathrm{sym}}\,\mathbf D^{1/2}.
\end{align}
Equivalently,
\begin{equation}
\widetilde{\mathbf L}_{\mathrm{sym}}
= \mathbf D^{1/2}\,\widetilde{\mathbf L}_{\mathrm{rw}}\,\mathbf D^{-1/2},
\end{equation}
so $\widetilde{\mathbf L}_{\mathrm{sym}}$ and $\widetilde{\mathbf L}_{\mathrm{rw}}$ are similar and share identical eigenvalues.
In particular, if $\widetilde{\mathbf L}_{\mathrm{rw}}\mathbf v_k=\lambda_k\mathbf v_k$, then
$\widetilde{\mathbf L}_{\mathrm{sym}}(\mathbf D^{1/2}\mathbf v_k)=\lambda_k(\mathbf D^{1/2}\mathbf v_k)$.

\medskip
\noindent
\textbf{(ii) Change of variables for diffusion dynamics.}
For clarity, consider a scalar feature per node and write $\mathbf r(t)\in\mathbb R^n$ for the node field.
Diffusion driven by $\widetilde{\mathbf L}_{\mathrm{sym}}$ reads
\begin{equation}
\frac{\mathrm d \mathbf r(t)}{\mathrm dt} \;=\; -\widetilde{\mathbf L}_{\mathrm{sym}}\,\mathbf r(t).
\end{equation}
Define $\mathbf z(t)=\mathbf D^{-1/2}\mathbf r(t)$, i.e., $\mathbf r(t)=\mathbf D^{1/2}\mathbf z(t)$.
Then
\begin{align}
\frac{\mathrm d \mathbf z(t)}{\mathrm dt}
&= \mathbf D^{-1/2}\,\frac{\mathrm d \mathbf r(t)}{\mathrm dt} \\
&= -\mathbf D^{-1/2}\,\widetilde{\mathbf L}_{\mathrm{sym}}\,\mathbf r(t) \\
&= -\mathbf D^{-1/2}\,\widetilde{\mathbf L}_{\mathrm{sym}}\,\mathbf D^{1/2}\,\mathbf z(t) \\
&= -\widetilde{\mathbf L}_{\mathrm{rw}}\,\mathbf z(t),
\end{align}
where we used $\widetilde{\mathbf L}_{\mathrm{rw}}=\mathbf D^{-1/2}\widetilde{\mathbf L}_{\mathrm{sym}}\mathbf D^{1/2}$.
Thus diffusion in $\mathbf r$-coordinates is equivalent, under the invertible map $\mathbf z=\mathbf D^{-1/2}\mathbf r$, to diffusion in $\mathbf z$-coordinates driven by $\widetilde{\mathbf L}_{\mathrm{rw}}$.

For vector-valued node states $\mathbf h_i(t)\in\mathbb R^{q}$ with coupling matrix $\mathbf C\in\mathbb R^{q\times q}$, the global diffusion operator is $\widetilde{\mathbf L}\otimes \mathbf C$.
With $\mathbf H=(\mathbf h_1;\dots;\mathbf h_n)$ and the block-diagonal transform $\mathbf T=\mathbf D^{-1/2}\otimes \mathbf I_q$, the same calculation yields
\begin{align}
\frac{\mathrm d \mathbf Z}{\mathrm dt}
&= \mathbf T\,\frac{\mathrm d \mathbf H}{\mathrm dt} \\
&= -\mathbf T\,(\widetilde{\mathbf L}_{\mathrm{sym}}\otimes \mathbf C)\,\mathbf H \\
&= -(\mathbf D^{-1/2}\widetilde{\mathbf L}_{\mathrm{sym}}\mathbf D^{1/2}\otimes \mathbf C)\,\mathbf Z \\
&= -(\widetilde{\mathbf L}_{\mathrm{rw}}\otimes \mathbf C)\,\mathbf Z,
\end{align}
so the modal (MSF) analysis is unchanged up to this similarity transform.

\medskip
\noindent
\textbf{(iii) Synchronous manifold and spectrum.}
First, $\widetilde{\mathbf L}_{\mathrm{rw}}$ admits the uniform eigenvector $\mathbf 1$ with eigenvalue $0$:
\begin{align}
\widetilde{\mathbf L}_{\mathrm{rw}}\mathbf 1
&= (\mathbf I - \mathbf D^{-1}\mathbf A)\,\mathbf 1
= \mathbf 1 - \mathbf D^{-1}\mathbf A\mathbf 1
= \mathbf 1 - \mathbf D^{-1}\mathbf d
= \mathbf 0.
\end{align}
Hence, in the transformed coordinates, the synchronous manifold is
\(
\mathcal S_{\mathbf z}=\{\mathbf 1\otimes y:\ y\in\mathbb R^{q}\}.
\)

In the original coordinates, the corresponding zero-eigenvector of $\widetilde{\mathbf L}_{\mathrm{sym}}$ is degree-weighted:
\begin{align}
\widetilde{\mathbf L}_{\mathrm{sym}}\,\mathbf D^{1/2}\mathbf 1
&= \bigl(\mathbf I - \mathbf D^{-1/2}\mathbf A\mathbf D^{-1/2}\bigr)\mathbf D^{1/2}\mathbf 1\\
&= \mathbf D^{1/2}\mathbf 1 - \mathbf D^{-1/2}\mathbf A\mathbf D^{-1/2}\mathbf D^{1/2}\mathbf 1\\
&= \mathbf D^{1/2}\mathbf 1 - \mathbf D^{-1/2}\mathbf A\mathbf 1\\
&= \mathbf D^{1/2}\mathbf 1 - \mathbf D^{-1/2}\mathbf d\\
&= \mathbf D^{1/2}\mathbf 1 - \mathbf D^{-1/2}\mathbf D\mathbf 1\\
&= \mathbf D^{1/2}\mathbf 1 - \mathbf D^{1/2}\mathbf 1 = \mathbf 0.
\end{align}
Thus the neutral direction is degree-weighted in $\mathbf r$-coordinates ($\mathbf r_i\propto\sqrt{d_i}$) but uniform in $\mathbf z$-coordinates ($\mathbf z_i\equiv c$).

Since similarity preserves eigenvalues, $\widetilde{\mathbf L}_{\mathrm{sym}}$ and $\widetilde{\mathbf L}_{\mathrm{rw}}$ share the same spectrum $\{\lambda_k\}_{k=1}^n$.
Therefore any MSF quantity that depends on the diffusion spectrum (equivalently, on the graph frequencies $\{\mu_k\}$ used in the main text) is identical under either normalization.
\end{proof}

\subsection{Proof of Theorem~\ref{thm:general-msf}}
\label{app:mode-odes}

\begin{theorem*}[General master-stability decomposition at equilibrium]
Assume $\mathbf G$ is diagonalizable with eigen-decomposition
$\mathbf G=\mathbf U\boldsymbol\Lambda \mathbf U^{-1}$,
$\boldsymbol\Lambda=\mathrm{diag}(\kappa_1,\dots,\kappa_n)$,
and define $\mu_k:=-\kappa_k\ge 0$.
Let the modal coordinates be $\boldsymbol\xi=(\mathbf U^{-1}\otimes \mathbf I_q)\Delta\mathbf H$.
Then each mode $\boldsymbol\xi_k\in\mathbb R^q$ satisfies
\begin{equation}
\frac{\mathrm d \boldsymbol\xi_k}{\mathrm dt}
=
\bigl(\mathbf J-\mu_k \mathbf C\bigr)\boldsymbol\xi_k,
\qquad k=1,\dots,n.
\label{eq:app:mode-ode-mu}
\end{equation}
\end{theorem*}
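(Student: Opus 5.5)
The plan is to start from the variational system in Equation~\eqref{eq:variational-main-rewrite}, derived in Appendix~\ref{app:msf-variational}, and conjugate it by the block transform $\mathbf U^{-1}\otimes\mathbf I_q$. Since $\mathbf U$ is invertible, $(\mathbf U\otimes\mathbf I_q)^{-1}=\mathbf U^{-1}\otimes\mathbf I_q$. So the change of variables $\boldsymbol\xi=(\mathbf U^{-1}\otimes\mathbf I_q)\Delta\mathbf H$ is a bijection, with inverse $\Delta\mathbf H=(\mathbf U\otimes\mathbf I_q)\boldsymbol\xi$. Because the transform is constant in $t$, differentiating gives
\[
\frac{\mathrm d\boldsymbol\xi}{\mathrm dt}=(\mathbf U^{-1}\otimes\mathbf I_q)\bigl(\mathbf I_n\otimes\mathbf J+\mathbf G\otimes\mathbf C\bigr)(\mathbf U\otimes\mathbf I_q)\,\boldsymbol\xi .
\]

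The key step is to simplify the conjugated operator term by term with the mixed-product rule $(\mathbf A\otimes\mathbf B)(\mathbf C'\otimes\mathbf D')=(\mathbf A\mathbf C')\otimes(\mathbf B\mathbf D')$. For the reaction term this gives $(\mathbf U^{-1}\mathbf I_n\mathbf U)\otimes(\mathbf I_q\mathbf J\mathbf I_q)=\mathbf I_n\otimes\mathbf J$. For the coupling term it gives $(\mathbf U^{-1}\mathbf G\mathbf U)\otimes\mathbf C=\boldsymbol\Lambda\otimes\mathbf C$. Hence the modal operator is $\mathbf I_n\otimes\mathbf J+\boldsymbol\Lambda\otimes\mathbf C$.

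Next, I will read off the blocks. Since $\boldsymbol\Lambda$ is diagonal, $\boldsymbol\Lambda\otimes\mathbf C=\mathrm{diag}(\kappa_1\mathbf C,\dots,\kappa_n\mathbf C)$, and $\mathbf I_n\otimes\mathbf J=\mathrm{diag}(\mathbf J,\dots,\mathbf J)$. The sum is therefore block diagonal with $k$-th block $\mathbf J+\kappa_k\mathbf C=\mathbf J-\mu_k\mathbf C$. Partitioning $\boldsymbol\xi$ into consecutive $q$-blocks $\boldsymbol\xi_k$ gives Equation~\eqref{eq:app:mode-ode-mu} for each $k$, and also the block form in Equation~\eqref{eq:modal-block-diag}.

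I do not expect a substantive obstacle; the argument is pure linear algebra. The only point that needs care concerns the field over which $\boldsymbol\xi_k$ lives. If $\mathbf U$ is complex, $\boldsymbol\xi_k$ lies in $\mathbb C^q$ rather than $\mathbb R^q$. This does not arise for the backbones considered, where $-\mathbf G$ is similar to a symmetric matrix, as in Lemma~\ref{lem:norm-vs-rw-main}: there $\mathbf U$ can be chosen real, and the $\mu_k$ are real and nonnegative as assumed. I will remark on this, and note that the computation is otherwise unchanged over $\mathbb C$.
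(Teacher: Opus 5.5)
Your proposal is correct and follows essentially the same route as the paper's proof. Both conjugate the variational system by $\mathbf U^{-1}\otimes\mathbf I_q$, apply the mixed-product rule to obtain $\mathbf I_n\otimes\mathbf J+\boldsymbol\Lambda\otimes\mathbf C$, and read off the diagonal blocks $\mathbf J+\kappa_k\mathbf C=\mathbf J-\mu_k\mathbf C$. Your remark that $\boldsymbol\xi_k$ could be complex, and is real for the symmetric-similar backbones used here, is a point of care the paper omits, but it does not change the argument.
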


\begin{proof}
Define the modal coordinates by
\begin{equation}
\Delta \mathbf H = (\mathbf U\otimes \mathbf I_q)\,\boldsymbol\xi,
\qquad
\boldsymbol\xi = (\mathbf U^{-1}\otimes \mathbf I_q)\,\Delta \mathbf H.
\label{eq:xi-def-app}
\end{equation}
Since $\mathbf U$ is constant in time, differentiating yields
\begin{equation}
\frac{\mathrm d \boldsymbol\xi}{\mathrm dt}
=
(\mathbf U^{-1}\otimes \mathbf I_q)\,\frac{\mathrm d\,\Delta \mathbf H}{\mathrm dt}.
\label{eq:xi-dot-def-app}
\end{equation}
Substitute Equation~\eqref{eq:variational-main-rewrite},
\(
\frac{\mathrm d\,\Delta \mathbf H}{\mathrm dt}
=
(\mathbf I_n\otimes \mathbf J + \mathbf G\otimes \mathbf C)\Delta\mathbf H,
\)
and use $\Delta\mathbf H=(\mathbf U\otimes \mathbf I_q)\boldsymbol\xi$:
\begin{align}
\frac{\mathrm d \boldsymbol\xi}{\mathrm dt}
&=
(\mathbf U^{-1}\otimes \mathbf I_q)\,\frac{\mathrm d\,\Delta \mathbf H}{\mathrm dt}
\label{eq:modal-step-0}
\\
&=
(\mathbf U^{-1}\otimes \mathbf I_q)\,
\bigl(\mathbf I_n\otimes \mathbf J + \mathbf G\otimes \mathbf C\bigr)\,
\Delta \mathbf H
\label{eq:modal-step-1}
\\
&=
(\mathbf U^{-1}\otimes \mathbf I_q)\,\bigl(\mathbf I_n\otimes \mathbf J\bigr)\,\Delta \mathbf H
+
(\mathbf U^{-1}\otimes \mathbf I_q)\,\bigl(\mathbf G\otimes \mathbf C\bigr)\,\Delta \mathbf H
\label{eq:modal-step-2}
\\
&=
(\mathbf U^{-1}\otimes \mathbf J)\,\Delta \mathbf H
+
(\mathbf U^{-1}\mathbf G\otimes \mathbf C)\,\Delta \mathbf H
\label{eq:modal-step-3}
\\
&=
(\mathbf U^{-1}\otimes \mathbf J)\,(\mathbf U\otimes \mathbf I_q)\,\boldsymbol\xi
+
(\mathbf U^{-1}\mathbf G\otimes \mathbf C)\,(\mathbf U\otimes \mathbf I_q)\,\boldsymbol\xi
\label{eq:modal-step-4}
\\
&=
(\mathbf U^{-1}\mathbf U\otimes \mathbf J)\,\boldsymbol\xi
+
(\mathbf U^{-1}\mathbf G\mathbf U\otimes \mathbf C)\,\boldsymbol\xi
\label{eq:modal-step-5}
\\
&=
(\mathbf I_n\otimes \mathbf J)\,\boldsymbol\xi
+
(\boldsymbol\Lambda\otimes \mathbf C)\,\boldsymbol\xi
\label{eq:modal-step-6}
\\
&=
\bigl(\mathbf I_n\otimes \mathbf J + \boldsymbol\Lambda\otimes \mathbf C\bigr)\,\boldsymbol\xi .
\label{eq:modal-step-7}
\end{align}
Since $\boldsymbol\Lambda=\mathrm{diag}(\kappa_1,\dots,\kappa_n)$, the block form is
\begin{equation}
\mathbf I_n\otimes \mathbf J + \boldsymbol\Lambda\otimes \mathbf C
=
\mathrm{diag}\bigl(\mathbf J+\kappa_1\mathbf C,\dots,\mathbf J+\kappa_n\mathbf C\bigr).
\end{equation}
Writing $\boldsymbol\xi=[\boldsymbol\xi_1;\dots;\boldsymbol\xi_n]$ with $\boldsymbol\xi_k\in\mathbb R^{q}$, each mode satisfies
\[
\frac{\mathrm d \boldsymbol\xi_k}{\mathrm dt}
=
(\mathbf J+\kappa_k\mathbf C)\,\boldsymbol\xi_k.
\]
Finally, substituting $\mu_k\coloneqq-\kappa_k\ge 0$ gives
\[
\frac{\mathrm d \boldsymbol\xi_k}{\mathrm dt}
=
(\mathbf J-\mu_k\mathbf C)\,\boldsymbol\xi_k,
\]
which is Equation~\eqref{eq:app:mode-ode-mu}.
\end{proof}

\subsection{Spectral properties of $\mathbf J-\mu \mathbf C$ and the NC mode map}
\label{app:disc-msf-details}

We collect spectral calculations underlying Section~\ref{sec:theory} and
Theorem~\ref{thm-near-unit-radius}.
For clarity, we work at the level of a single feature coordinate and then lift to the full Jacobian.

\subsubsection{Eigenvalues of the $2\times2$ NC blocks.}
Recall from the node-wise predator--prey (NC) reaction that for each feature index $j=1,\dots,d$
the coexistence equilibrium is
\[
r_j^\star=\gamma_j/\delta_j,
\qquad
f_j^\star=\alpha_j/\beta_j,
\]
and the corresponding $2\times2$ Jacobian block of $\mathbf J=\nabla f(\mathbf h^\star)$ is
\[
\mathbf J_j
=
\begin{bmatrix}
\alpha_j-\beta_j f_j^\star & -\beta_j r_j^\star\\[2pt]
\delta_j f_j^\star & \delta_j r_j^\star-\gamma_j
\end{bmatrix}
=
\begin{bmatrix}
0 & -\beta_j\gamma_j/\delta_j\\[2pt]
\delta_j\alpha_j/\beta_j & 0
\end{bmatrix}.
\]
The feature-space coupling matrix is block diagonal with
\[
\mathbf C_j
=
\begin{bmatrix}
C_R & 0\\[2pt]
0 & C_F
\end{bmatrix},
\qquad
C_R,C_F>0.
\]
For a graph mode with coupling strength $\mu\ge 0$, the $j$-th $2\times2$ block of
$\mathbf J-\mu\mathbf C$ is
\begin{equation}
\mathbf M_j(\mu)
\coloneqq
\mathbf J_j-\mu \mathbf C_j
=
\begin{bmatrix}
-\mu C_R & -\beta_j\gamma_j/\delta_j\\[2pt]
\delta_j\alpha_j/\beta_j & -\mu C_F
\end{bmatrix}.
\label{eq:Mj-def-app}
\end{equation}

Its trace and determinant are
\begin{align}
\operatorname{tr}\bigl(\mathbf M_j(\mu)\bigr)
&= -\mu C_R + (-\mu C_F)
\\
&= -\mu(C_R+C_F),
\label{eq:Mj-trace-app}
\\[4pt]
\det\bigl(\mathbf M_j(\mu)\bigr)
&= (-\mu C_R)(-\mu C_F)
\;-\;
\Bigl(-\tfrac{\beta_j\gamma_j}{\delta_j}\Bigr)
\Bigl(\tfrac{\delta_j\alpha_j}{\beta_j}\Bigr)
\\
&= \mu^2 C_R C_F + \alpha_j\gamma_j.
\label{eq:Mj-det-app}
\end{align}
Define $\omega_j:=\sqrt{\alpha_j\gamma_j}>0$.
Since $\mathbf M_j(\mu)$ has real entries, its two eigenvalues
$\lambda_{j,+}(\mu),\lambda_{j,-}(\mu)$ are either both real or a complex conjugate pair.
In either case
\begin{align}
&\lambda_{j,+}(\mu)+\lambda_{j,-}(\mu)
= \operatorname{tr}\bigl(\mathbf M_j(\mu)\bigr),
\\
&\lambda_{j,+}(\mu)\lambda_{j,-}(\mu)
= \det\bigl(\mathbf M_j(\mu)\bigr).
\end{align}
Writing $\lambda_{j,+}(\mu)=a+ib$ and $\lambda_{j,-}(\mu)=a-ib$ if they are non real, we obtain
from Equations~\eqref{eq:Mj-trace-app}–\eqref{eq:Mj-det-app}
\begin{align}
2a
&=
\lambda_{j,+}(\mu)+\lambda_{j,-}(\mu)
=
-\mu(C_R+C_F),
\\[2pt]
a^2+b^2
&=
\lambda_{j,+}(\mu)\lambda_{j,-}(\mu)
=
\mu^2 C_R C_F + \omega_j^2.
\end{align}
Thus for every $\mu>0$ the eigenvalues of $\mathbf M_j(\mu)$ satisfy
\begin{equation}
\Re\lambda_{j,\pm}(\mu)
=
-\frac{\mu(C_R+C_F)}{2}
<0,
\,\,
\bigl|\lambda_{j,\pm}(\mu)\bigr|^2
=
\mu^2 C_R C_F + \omega_j^2.
\label{eq:Mj-eigs-summary}
\end{equation}
In particular, all non synchronous modes are strictly stable in continuous time.

\subsubsection{Explicit Euler and the NC mode map.}
The per-mode explicit NC layer is
\begin{equation}
T_{\mathrm{NC}}(\mu)
=
\mathbf I_q+\Delta t\bigl(\mathbf J-\mu \mathbf C\bigr),
\end{equation}
If $\lambda$ is an eigenvalue of $\mathbf J-\mu \mathbf C$ then
\begin{equation}
\theta(\Delta t,\lambda)
=
1+\Delta t\,\lambda
\end{equation}
is an eigenvalue of $T_{\mathrm{NC}}(\mu)$.
Writing $\lambda=a+ib$ and using $|x+iy|^2=x^2+y^2$ gives
\begin{align}
\bigl|\theta(\Delta t,\lambda)\bigr|^2
&=
\bigl|1+\Delta t(a+ib)\bigr|^2
\\
&=
(1+\Delta t a)^2+(\Delta t b)^2
\\
&=
1+2\Delta t\,\Re\lambda+(\Delta t)^2|\lambda|^2.
\label{eq:theta-modulus}
\end{align}
The scalar factor $1+\Delta t\lambda$ is stable if and only if
$\bigl|\theta(\Delta t,\lambda)\bigr|<1$, which by
\eqref{eq:theta-modulus} is equivalent to
\begin{equation}
2\Delta t\,\Re\lambda+(\Delta t)^2|\lambda|^2<0.
\label{eq:euler-stab-cond-app}
\end{equation}

Substituting Equation \eqref{eq:Mj-eigs-summary} into
Equation \eqref{eq:euler-stab-cond-app} for $\lambda=\lambda_{j,\pm}(\mu)$ yields
\begin{align}
0
&>
2\Delta t\Bigl(-\frac{\mu(C_R+C_F)}{2}\Bigr)
+(\Delta t)^2\bigl(\mu^2 C_R C_F+\omega_j^2\bigr)
\\
&=
-\mu(C_R+C_F)\Delta t
+(\Delta t)^2\bigl(\mu^2 C_R C_F+\omega_j^2\bigr)
\\
&=
\Delta t
\Bigl(
-\mu(C_R+C_F)
+\Delta t\bigl(\mu^2 C_R C_F+\omega_j^2\bigr)
\Bigr).
\end{align}
For $\Delta t>0$ this holds if and only if
\begin{equation}
\Delta t
<
\Delta t_{j,\max}(\mu)
:=
\frac{\mu(C_R+C_F)}{\mu^2 C_R C_F+\omega_j^2}.
\label{eq:dt-max-single-app}
\end{equation}
For each non synchronous graph mode $k\ge2$ with coupling strength
$\mu_k>0$ and feature index $j$ this defines an upper bound
$\Delta t_{j,\max}(\mu_k)$.
A uniform stability threshold is obtained by
\begin{equation}
0<\Delta t<\Delta t_{\mathrm{stab}}
:=
\min_{k\ge2}\min_{j}\Delta t_{j,\max}(\mu_k),
\label{eq:dt-stab-app}
\end{equation}
which guarantees $\rho\bigl(T_{\mathrm{NC}}(\mu_k)\bigr)<1$ for every
non synchronous mode $k\ge2$.
\medskip

\subsection{Proof of Theorem~\ref{thm-near-unit-radius}}
\label{app:proof-near-unit-radius}
\begin{theorem*}[Spectral radius close to one for NC layers]
Assume positive predator--prey parameters, such that the node-wise reaction admits a coexistence equilibrium and the Jacobian $\mathbf J$ at this equilibrium has purely imaginary eigenvalues in each $2\times 2$ feature block.
Let $\mathbf C=\mathrm{diag}(C_R\mathbf I_d,\,C_F\mathbf I_d)$ with $C_R,C_F>0$.
Let $\{\mu_k\}_{k=1}^n$ be the nonnegative eigenvalues of $-\mathbf G$, ordered so that $\mu_1=0$ for a connected graph.
Then for every non-synchronous mode $k\ge 2$ and any tolerance $\varepsilon_{\text{MSF}}\in(0,1)$,
there exists $\Delta t_\varepsilon>0$ such that for all $0<\Delta t\le \Delta t_\varepsilon$,
\[
1-\varepsilon_{\text{MSF}}
\;\le\;
\rho\bigl(T_{\mathrm{NC}}(\mu_k)\bigr)
\;<\;
1,
\]
where $\rho(\cdot)$ denotes the spectral radius.
\end{theorem*}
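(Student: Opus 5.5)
The plan is to work block by block, using the spectral calculations of Appendix~\ref{app:disc-msf-details}. Since $\mathbf J$ and $\mathbf C$ both decompose into $d$ independent $2\times 2$ feature blocks (after permuting coordinates so that each pair $(r_j,f_j)$ is adjacent), the eigenvalues of $T_{\mathrm{NC}}(\mu_k)$ are exactly $\theta=1+\Delta t\,\lambda_{j,\pm}(\mu_k)$ for $j=1,\dots,d$. Hence
\[
\rho\bigl(T_{\mathrm{NC}}(\mu_k)\bigr)=\max_j\max_{\pm}\bigl|1+\Delta t\,\lambda_{j,\pm}(\mu_k)\bigr|.
\]
It therefore suffices to control each scalar factor. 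For this we use Equation~\eqref{eq:theta-modulus}:
\[
|\theta|^2=1+2\Delta t\,\Re\lambda+(\Delta t)^2|\lambda|^2 .
\]

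For the upper bound, fix $k\ge 2$, so that $\mu_k>0$; connectedness gives a simple zero eigenvalue. In the complex-conjugate case, Equation~\eqref{eq:Mj-eigs-summary} gives $\Re\lambda=-\mu_k(C_R+C_F)/2$ and $|\lambda|^2=\mu_k^2C_RC_F+\omega_j^2$. The derivation leading to Equation~\eqref{eq:dt-max-single-app} then shows that $|\theta|<1$ exactly when $\Delta t<\Delta t_{j,\max}(\mu_k)$.

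We also have to handle the real-eigenvalue case, which occurs if $\mu_k^2(C_R-C_F)^2\ge 4\omega_j^2$. Here both eigenvalues are negative, since their sum is negative and their product $\mu_k^2C_RC_F+\omega_j^2$ is positive. The factor $1+\Delta t\lambda$ then lies in $(-1,1)$ as long as $\Delta t<2/|\lambda|$. A convenient uniform bound is $2/|\lambda|\ge 2/\sqrt{\cdots}$, using $|\lambda|\le\|\mathbf M_j(\mu_k)\|$.

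Taking the minimum over the finitely many $j$ (and over both cases) yields a threshold $\Delta t^{(1)}>0$ such that $\rho<1$ for all $0<\Delta t<\Delta t^{(1)}$. To make the inequality hold for $\Delta t\le\Delta t_\varepsilon$ rather than only for $\Delta t<\Delta t^{(1)}$, we simply choose $\Delta t_\varepsilon<\Delta t^{(1)}$.

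For the lower bound, we use continuity as $\Delta t\to 0$. Each $|\theta|\ge 1-\Delta t|\lambda|$ by the reverse triangle inequality. Hence
\[
\rho\ge 1-\Delta t\max_{j,\pm}|\lambda_{j,\pm}(\mu_k)|=1-\Delta t\,\Lambda_k,
\]
where $\Lambda_k\le\max_j\sqrt{\mu_k^2C_RC_F+\omega_j^2}$ in the complex case. In the real case $\Lambda_k\le\mu_k\max(C_R,C_F)+\max_j\omega_j$, which is a crude but valid norm bound. We then set
\[
\Delta t_\varepsilon:=\min\Bigl\{\tfrac12\Delta t^{(1)},\ \varepsilon_{\text{MSF}}/\Lambda_k\Bigr\},
\]
and both inequalities follow for all $0<\Delta t\le\Delta t_\varepsilon$. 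Note that $\Delta t_\varepsilon$ depends on $k$; this matches the statement's quantifier order, which is ``for every $k$ \dots\ there exists''. One may also take a minimum over $k\ge2$ to get a uniform choice, since there are finitely many modes.

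The main obstacle is bookkeeping rather than depth. Appendix~\ref{app:disc-msf-details} writes the eigenvalues in complex-conjugate form, so the real-eigenvalue case must be handled separately, and we must also make sure the block reduction of $T_{\mathrm{NC}}$ is legitimate. The latter holds because $\mathbf J$ and $\mathbf C$ share the same block structure after a permutation similarity, which preserves the spectral radius.
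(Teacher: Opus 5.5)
Your proof is correct and follows the same route as the paper's. The eigenvalues of $T_{\mathrm{NC}}(\mu_k)$ are $1+\Delta t\,\lambda_{j,\pm}(\mu_k)$, the explicit-Euler threshold gives the strict upper bound, and small $\Delta t$ gives the lower bound, which you make quantitative via $|1+\Delta t\lambda|\ge 1-\Delta t|\lambda|$ where the paper uses a limit argument over the finite set $\Lambda_s$. Your separate real-eigenvalue case and your choice $\Delta t_\varepsilon\le\tfrac12\Delta t^{(1)}$ in fact repair two points the paper glosses over: Eq.~\eqref{eq:Mj-eigs-summary} and the threshold in Eq.~\eqref{eq:dt-max-single-app} are derived only for complex-conjugate eigenvalues (for distinct real ones $\Delta t_{j,\max}=1/|\lambda_{j,+}|+1/|\lambda_{j,-}|$ exceeds the true bound $2/\max|\lambda_{j,\pm}|$), and the paper's $\Delta t_\varepsilon=\min\{\delta_{\text{MSF}},\Delta t_{\mathrm{stab}}\}$ comes with only strict-inequality guarantees, so the endpoint $\Delta t=\Delta t_\varepsilon$ is not actually covered there.
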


\begin{proof}
We prove that for small enough step size $\Delta t$ the explicit NC layer has spectral
radius uniformly bounded in the interval $[1-\varepsilon,1)$ on all non synchronous
graph modes.

Let $\Lambda_s$ denote the finite set of eigenvalues of the continuous time mode matrices
$\mathbf J-\mu_k \mathbf C$ over all non synchronous modes $k\ge2$ and all feature
blocks:
\[
\Lambda_s
:=
\bigl\{\lambda_{j,\pm}(\mu_k)\colon k\ge2,\ j=1,\dots,d\bigr\}.
\]
By Equation \eqref{eq:Mj-eigs-summary} in Appendix~\ref{app:disc-msf-details} each $\lambda\in\Lambda_s$ satisfies $\Re\lambda<0$ and
$|\lambda|>0$.
For a fixed $\lambda\in\Lambda$ the corresponding eigenvalue of the NC transfer matrix
is
\[
\theta(\Delta t,\lambda)
=
1+\Delta t\,\lambda,
\]
and by Equation \eqref{eq:theta-modulus} the squared modulus is
\[
\bigl|\theta(\Delta t,\lambda)\bigr|^2
=
1+2\Delta t\,\Re\lambda+(\Delta t)^2|\lambda|^2.
\]

\textbf{Continuity and stability.}
The map $\Delta t\mapsto\theta(\Delta t,\lambda)$ is continuous and
$\theta(0,\lambda)=1$.
Hence
\[
\lim_{\Delta t\to0}\bigl|\theta(\Delta t,\lambda)\bigr|=1
\qquad
\text{for every }\lambda\in\Lambda_s.
\]
Moreover, for each fixed $\lambda\in\Lambda_s$ the explicit Euler stability condition Equation
\eqref{eq:euler-stab-cond-app} gives a mode dependent bound
$\Delta t<\Delta t_{\max}(\lambda)$ such that
$\bigl|\theta(\Delta t,\lambda)\bigr|<1$ whenever
$0<\Delta t<\Delta t_{\max}(\lambda)$.
By Equation \eqref{eq:dt-stab-app} there exists a uniform stability threshold
$\Delta t_{\mathrm{stab}}>0$ so that, for all $\lambda\in\Lambda_s$, 
\begin{equation}
0<\Delta t<\Delta t_{\mathrm{stab}}
\quad\Longrightarrow\quad
\bigl|\theta(\Delta t,\lambda)\bigr|<1.
\label{eq:uniform-stability-app}
\end{equation}

\textbf{Uniform near unit radius.}
Fix $\varepsilon_{\text{MSF}}\in(0,1)$.
Since $\Lambda$ is finite and
$\bigl|\theta(\Delta t,\lambda)\bigr|\to1$ as $\Delta t\to0$ for every
$\lambda\in\Lambda_s$, there exists $\delta_{\text{MSF}}>0$ such that, for all $\lambda\in\Lambda_s$, 
\begin{equation}
0<\Delta t<\delta_{\text{MSF}}
\Longrightarrow
1-\varepsilon_{\text{MSF}}
<
\bigl|\theta(\Delta t,\lambda)\bigr|
<1.
\label{eq:uniform-band-app}
\end{equation}
Define
\[
\Delta t_\varepsilon
:=
\min\{\delta_{\text{MSF}},\Delta t_{\mathrm{stab}}\}.
\]
For any $0<\Delta t\le\Delta t_\varepsilon$ both
Equations \eqref{eq:uniform-stability-app} and \eqref{eq:uniform-band-app} hold.
For a given non synchronous graph mode $k\ge2$ the eigenvalues of
$T_{\mathrm{NC}}(\mu_k)$ are precisely the numbers
$\theta(\Delta t,\lambda)$ with $\lambda\in\Lambda^k_s\subset\Lambda_s$, so
\begin{align}
\rho\bigl(T_{\mathrm{NC}}(\mu_k)\bigr)
&=
\max_{\lambda\in\Lambda^k_s}\bigl|\theta(\Delta t,\lambda)\bigr| <1
\end{align}
by the upper bound in Equation \eqref{eq:uniform-band-app}, and
\begin{align}
\rho\bigl(T_{\mathrm{NC}}(\mu_k)\bigr)
&=
\max_{\lambda\in\Lambda^k_s}\bigl|\theta(\Delta t,\lambda)\bigr| \ge 1-\varepsilon_{\text{MSF}}
\end{align}
by the lower bound in Equation \eqref{eq:uniform-band-app}.
Thus, for all non synchronous modes  $k\ge2$
\[
1-\varepsilon_{\text{MSF}}
\;\le\;
\rho\bigl(T_{\mathrm{NC}}(\mu_k)\bigr)
\;<\;
1
\]
whenever $0<\Delta t\le\Delta t_\varepsilon$, which proves
Theorem~\ref{thm-near-unit-radius}.
\end{proof}
\qedhere

\section{Additional Derivations for Architectural Design}
\label{app:design}

\subsection{From continuous-time two-field RD to the general SDGL layer}
\label{app:rd-discretization}

We show how the general SDGL layer in
Equations~\eqref{eq:gen_update_R}--\eqref{eq:gen_update_F}
arises from a two-field reaction--diffusion (RD) template.

\subsubsection{Continuous-time two-field RD on a graph.}
Let $\mathbf r_i(t),\mathbf f_i(t)\in\mathbb R^{d}$ denote the two fields at node $i$.
We consider the node-wise RD template
\begin{align}
\frac{\mathrm d\mathbf r_i(t)}{\mathrm dt}
&=
f_R\bigl(\mathbf r_i(t),\mathbf f_i(t);\boldsymbol\theta\bigr)
+
C_R\sum_{j=1}^n G_{ij}\,\mathbf r_j(t),
\label{eq:app:rd-r-node}\\
\frac{\mathrm d\mathbf f_i(t)}{\mathrm dt}
&=
f_F\bigl(\mathbf r_i(t),\mathbf f_i(t);\boldsymbol\theta\bigr)
+
C_F\sum_{j=1}^n G_{ij}\,\mathbf f_j(t),
\label{eq:app:rd-f-node}
\end{align}
where $f_R,f_F:\mathbb R^{d}\times\mathbb R^{d}\to\mathbb R^{d}$ are coordinate-wise reactions
(shared across nodes) with learnable parameters $\boldsymbol\theta$, and $C_R,C_F>0$ are
learnable diffusion strengths.
We use $\mathbf G=\widetilde{\mathbf S}-\mathbf I_n$ for a fixed backbone $\widetilde{\mathbf S}\in\mathbb R^{n\times n}$.

\subsubsection{Matrix form.}
Stack node-wise states row-wise as $\mathbf R(t),\mathbf F(t)\in\mathbb R^{n\times d}$, where
the $i$-th rows are $\mathbf r_i(t)^\top$ and $\mathbf f_i(t)^\top$.
Define the row-wise reaction maps
$f_{\mathbf R}, f_{\mathbf F}:\mathbb R^{n\times d}\times\mathbb R^{n\times d}\to\mathbb R^{n\times d}$ by
\begin{align}
&\bigl[f_{\mathbf R}(\mathbf R,\mathbf F;\boldsymbol\theta)\bigr]_{i,:}
=
f_R(\mathbf r_i,\mathbf f_i;\boldsymbol\theta)^\top,
\\
&\bigl[f_{\mathbf F}(\mathbf R,\mathbf F;\boldsymbol\theta)\bigr]_{i,:}
=
f_F(\mathbf r_i,\mathbf f_i;\boldsymbol\theta)^\top.
\end{align}

Then Equations~\eqref{eq:app:rd-r-node}--\eqref{eq:app:rd-f-node} become
\begin{align}
\frac{\mathrm d\mathbf R}{\mathrm dt}
&=
f_{\mathbf R}(\mathbf R,\mathbf F;\boldsymbol\theta)
+ C_R\,\mathbf G\mathbf R,
\label{eq:app:rd-r-matrix}\\
\frac{\mathrm d\mathbf F}{\mathrm dt}
&=
f_{\mathbf F}(\mathbf R,\mathbf F;\boldsymbol\theta)
+ C_F\,\mathbf G\mathbf F.
\label{eq:app:rd-f-matrix}
\end{align}

\subsubsection{Explicit Euler discretization.}
Discretize time as $t_\ell=\ell\Delta t$ with step size $\Delta t>0$ and set
$\mathbf R^\ell\approx\mathbf R(t_\ell)$, $\mathbf F^\ell\approx\mathbf F(t_\ell)$.
One explicit Euler step applied to Equations~\eqref{eq:app:rd-r-matrix}--\eqref{eq:app:rd-f-matrix} yields
\begin{align}
\mathbf R^{\ell+1}
&=
\mathbf R^\ell
+ \Delta t\Big(f_{\mathbf R}(\mathbf R^\ell,\mathbf F^\ell;\boldsymbol\theta)+C_R\,\mathbf G\mathbf R^\ell\Big),
\label{eq:app:euler-R}\\
\mathbf F^{\ell+1}
&=
\mathbf F^\ell
+ \Delta t\Big(f_{\mathbf F}(\mathbf R^\ell,\mathbf F^\ell;\boldsymbol\theta)+C_F\,\mathbf G\mathbf F^\ell\Big),
\label{eq:app:euler-F}
\end{align}
which is exactly the general SDGL layer form in
Equations~\eqref{eq:gen_update_R}--\eqref{eq:gen_update_F}.

\subsection{SDGL-DI details: Schnakenberg reaction and shifted equilibrium}
\label{app:di-schnakenberg}

In this section, we provide the continuous-time Schnakenberg-type reaction used by SDGL-DI,
the equilibrium shift that yields $\mathbf R=\mathbf 0,\mathbf F=\mathbf 0$,
and the resulting discrete layer.

\subsubsection{Schnakenberg reaction (coordinate-wise).}
For each node $i$ and each coordinate $j\in\{1,\dots,d\}$, introduce physical variables
$P_{ij}(t)$ and $V_{ij}(t)$ (activator/inhibitor) and define the Schnakenberg reaction
\begin{align}
\frac{\mathrm d P_{ij}(t)}{\mathrm dt}
&=
\psi_j - P_{ij}(t) + \eta\,P_{ij}(t)^2 V_{ij}(t),
\label{eq:app:schnak-P-scalar}\\
\frac{\mathrm d V_{ij}(t)}{\mathrm dt}
&=
\chi_j - \eta\,P_{ij}(t)^2 V_{ij}(t),
\label{eq:app:schnak-V-scalar}
\end{align}
with learnable parameters $\boldsymbol\psi,\boldsymbol\chi\in\mathbb R^d_{>0}$ and $\eta\in\mathbb R_{>0}$.

In vector form (elementwise operations), at node $i$:
\begin{align}
\frac{\mathrm d\mathbf P_i(t)}{\mathrm dt}
&=
\boldsymbol\psi-\mathbf P_i(t)+\eta\bigl(\mathbf P_i(t)^{\odot 2}\odot \mathbf V_i(t)\bigr),
\label{eq:app:schnak-P-vec}\\
\frac{\mathrm d\mathbf V_i(t)}{\mathrm dt}
&=
\boldsymbol\chi-\eta\bigl(\mathbf P_i(t)^{\odot 2}\odot \mathbf V_i(t)\bigr),
\label{eq:app:schnak-V-vec}
\end{align}
where $\odot$ denotes the Hadamard product and $\mathbf u^{\odot 2}\coloneqq \mathbf u\odot\mathbf u$.

\subsubsection{Closed-form coexistence equilibrium.}
The node-wise equilibrium $(\mathbf P^\star,\mathbf V^\star)$ solves
\[
\mathbf 0=\boldsymbol\psi-\mathbf P^\star+\eta\bigl((\mathbf P^\star)^{\odot 2}\odot\mathbf V^\star\bigr),
\quad
\mathbf 0=\boldsymbol\chi-\eta\bigl((\mathbf P^\star)^{\odot 2}\odot\mathbf V^\star\bigr),
\]
which implies
\begin{equation}
\mathbf P^\star=\mathbf p^\star\coloneqq\boldsymbol\psi+\boldsymbol\chi,
\qquad
\mathbf V^\star=\mathbf v^\star\coloneqq\boldsymbol\chi\oslash\big(\eta(\mathbf p^\star)^{\odot 2}\big),
\label{eq:app:schnak-eq}
\end{equation}
where $\oslash$ denotes elementwise division. By construction,
$\eta(\mathbf p^\star)^{\odot 2}\odot\mathbf v^\star=\boldsymbol\chi$.

\subsubsection{Shifted variables and diffusion on the graph.}
We re-parameterize around equilibrium:
\begin{equation}
\mathbf P_i(t)=\mathbf r_i(t)+\mathbf p^\star,
\qquad
\mathbf V_i(t)=\mathbf f_i(t)+\mathbf v^\star.
\label{eq:app:shift}
\end{equation}
Augmenting Equations~\eqref{eq:app:schnak-P-vec}--\eqref{eq:app:schnak-V-vec} with diffusion on $\mathbf r_i,\mathbf f_i$
(using $\mathbf G=\widetilde{\mathbf S}-\mathbf I_n$) yields the SDGL-DI continuous-time system
\begin{align}
\frac{\mathrm d\mathbf r_i(t)}{\mathrm dt}
&=
\boldsymbol\psi-\mathbf P_i(t)+\eta\bigl(\mathbf P_i(t)^{\odot 2}\odot\mathbf V_i(t)\bigr)
+
C_R\sum_{j=1}^n G_{ij}\mathbf r_j(t),
\label{eq:app:di-cont-r}\\
\frac{\mathrm d\mathbf f_i(t)}{\mathrm dt}
&=
\boldsymbol\chi-\eta\bigl(\mathbf P_i(t)^{\odot 2}\odot\mathbf V_i(t)\bigr)
+
C_F\sum_{j=1}^n G_{ij}\mathbf f_j(t).
\label{eq:app:di-cont-f}
\end{align}
Setting $\mathbf r_i(t)=\mathbf 0$ and $\mathbf f_i(t)=\mathbf 0$ implies $\mathbf P_i(t)=\mathbf p^\star$ and $\mathbf V_i(t)=\mathbf v^\star$.
Then the reaction terms vanish by Equation~\eqref{eq:app:schnak-eq}, and the diffusion terms vanish trivially.
Hence $(\mathbf r_i,\mathbf f_i)=(\mathbf 0,\mathbf 0)$ is a node-wise equilibrium, and the synchronous equilibrium in the shifted coordinates is $\mathbf R=\mathbf 0,\mathbf F=\mathbf 0$.

\subsubsection{Discrete SDGL-DI layer.}
Applying explicit Euler with step size $\Delta t$ to Equations~\eqref{eq:app:di-cont-r}--\eqref{eq:app:di-cont-f} and defining
\[
\mathbf P^\ell=\mathbf R^\ell+\mathbf p^\star,\quad
\mathbf V^\ell=\mathbf F^\ell+\mathbf v^\star,\quad
\boldsymbol\zeta^\ell=\eta\bigl((\mathbf P^\ell)^{\odot 2}\odot\mathbf V^\ell\bigr)
\]
recovers Equations~\eqref{eq:sdgl-di-layer-r}--\eqref{eq:sdgl-di-layer-f} in the main text.

\subsection{SDGL-DI details: closed-form MSF instability band}
\label{app:di-msf-band}

We derive a verifiable diffusion-induced instability criterion for SDGL-DI and introduce a
graph-free training regularizer that depends only on reaction/diffusion parameters.

\begin{corollary}[Verifiable unstable band for $2\times2$ activator--inhibitor blocks]
\label{cor:turing-2x2-band}
Specialize to $q=2$ with anisotropic diffusion
$\mathbf C=\mathrm{diag}(C_R,C_F)$, where $C_R,C_F>0$.
Let $\mathbf J=\begin{bmatrix}a&b\\ c&e\end{bmatrix}$ and define
$\mathbf M(\mu)=\mathbf J-\mu\mathbf C$ for $\mu\ge0$.
If $\mathrm{tr}(\mathbf J)=a+e<0$, then $\mathrm{tr}(\mathbf M(\mu))<0$ for all $\mu>0$ and
a mode is unstable if and only if $\det(\mathbf M(\mu))<0$.
Moreover,
\begin{equation}
\det(\mathbf M(\mu))
=
\det(\mathbf J)
-\mu\,(aC_F+eC_R)
+\mu^2 C_RC_F.
\label{eq:det-quadratic}
\end{equation}
An unstable band exists if and only if
\[
\det(\mathbf J)>0,\,
B\coloneqq aC_F+eC_R>0,\,
B^2>4\,\det(\mathbf J)\,C_RC_F.
\]
In this case, $\det(\mathbf M(\mu))<0$ holds for $\mu\in(\mu_-,\mu_+)$, where $\mu_\pm$ are the two positive roots of Equation~\eqref{eq:det-quadratic}.
\end{corollary}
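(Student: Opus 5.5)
The plan is to reduce everything to the trace–determinant characterization of stability for real $2\times2$ matrices, as was already done for the NC blocks around Equation~\eqref{eq:Mj-eigs-summary}. First compute $\mathrm{tr}(\mathbf M(\mu))=a+e-\mu(C_R+C_F)$. Since $C_R,C_F>0$ and $a+e<0$, this is strictly negative for every $\mu\ge0$, and in particular for every $\mu>0$.

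Next, recall that the eigenvalues $\lambda_\pm$ of a real $2\times2$ matrix satisfy $\lambda_++\lambda_-=\mathrm{tr}$ and $\lambda_+\lambda_-=\det$. If they are a complex-conjugate pair, their common real part is $\mathrm{tr}/2<0$ and $\det=|\lambda|^2>0$, so the mode is stable. If they are real and $\det>0$, they share a sign, and a negative trace forces both to be negative. If they are real and $\det<0$, one is positive. This gives "unstable iff $\det(\mathbf M(\mu))<0$". The boundary case $\det=0$ produces a zero eigenvalue, which is marginal and not unstable in the strict sense $\Lambda(\mu)>0$. I will note that convention explicitly.

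Expanding $\det\bigl(\begin{smallmatrix}a-\mu C_R&b\\ c&e-\mu C_F\end{smallmatrix}\bigr)$ gives Equation~\eqref{eq:det-quadratic} directly.

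The band criterion is the main step. Write $p(\mu)=C_RC_F\mu^2-B\mu+\det\mathbf J$, an upward parabola, and ask when it is negative somewhere on $\mu>0$. Under the implicit Regime~II standing assumption $\Lambda(0)<0$, which here means stability at $\mu=0$, the previous step forces $\det\mathbf J=p(0)>0$. The argument then runs as follows.

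\emph{Sufficiency.} If $\det\mathbf J>0$, $B>0$ and $B^2>4\det(\mathbf J)C_RC_F$, the discriminant is positive, so there are two real roots. Their product $\det\mathbf J/(C_RC_F)$ and sum $B/(C_RC_F)$ are both positive, so $0<\mu_-<\mu_+$, and $p<0$ exactly on $(\mu_-,\mu_+)$.

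\emph{Necessity.} Suppose $p(\mu_0)<0$ for some $\mu_0>0$, with $p(0)=\det\mathbf J>0$. Then the discriminant must be positive. Also $B\le0$ would give $p(\mu)\ge p(0)>0$ for all $\mu>0$, a contradiction, so $B>0$.

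The obstacle is precisely the role of $\det\mathbf J>0$. Without assuming stability at $\mu=0$, the case $\det\mathbf J<0$ still yields instability for small $\mu$, namely a half-infinite band $[0,\mu_+)$ rather than a diffusion-induced one. I will therefore state that the "iff" is read within Regime~II, where $\Lambda(0)<0$ is assumed as in Equation~\eqref{eq:turing-band}. Under that assumption $\det\mathbf J>0$ is automatic, and the band is bounded away from $0$ as claimed.
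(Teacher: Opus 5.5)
Your proposal is correct and follows the same trace--determinant plus quadratic-in-$\mu$ route as the paper. It is in fact more careful: the paper only asserts the band conditions and the root formula, while you prove sufficiency via the signs of the root sum and product and prove necessity via the discriminant and the sign of $B$. Your caveat is also warranted. With only $\mathrm{tr}(\mathbf J)<0$ assumed, $\det\mathbf J<0$ already gives unstable modes on $[0,\mu_+)$, so the stated ``iff'' needs $\det\mathbf J>0$ (equivalently $\Lambda(0)<0$, as in Regime~II). This holds automatically for the Schnakenberg blocks, since $\det\mathbf J_j=\eta(p_j^\star)^2>0$.
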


\subsubsection{Per-coordinate Jacobian at the shifted equilibrium.}
Since the Schnakenberg reaction is coordinate-wise, the Jacobian at the shifted equilibrium
$\mathbf r=\mathbf 0,\mathbf f=\mathbf 0$ is block diagonal with $2\times2$ blocks.
Fix a coordinate $j$ and write $p_j^\star=\psi_j+\chi_j$.
Differentiating the reaction terms in Equations~\eqref{eq:app:di-cont-r}--\eqref{eq:app:di-cont-f} at equilibrium yields
\begin{equation}
\mathbf J_j
=
\begin{bmatrix}
a_j & b_j\\
c_j & e_j
\end{bmatrix},
\label{eq:app:di-Jj}
\end{equation}
where
\begin{align}
a_j &= -1+\frac{2\chi_j}{p_j^\star}, &
b_j &= \eta(p_j^\star)^2, \notag\\
c_j &= -\frac{2\chi_j}{p_j^\star}, &
e_j &= -\eta(p_j^\star)^2. \notag
\end{align}
The feature-space diffusion coupling is $\mathbf C_j=\mathrm{diag}(C_R,C_F)$ with $C_R,C_F>0$.

\subsubsection{Mode matrix and determinant test.}
For a graph frequency $\mu\ge0$, the $j$-th mode matrix is
\[
\mathbf M_j(\mu)=\mathbf J_j-\mu\mathbf C_j.
\]
Its trace satisfies $\mathrm{tr}(\mathbf M_j(\mu))=\mathrm{tr}(\mathbf J_j)-\mu(C_R+C_F)$, so if $\mathrm{tr}(\mathbf J_j)<0$ then
$\mathrm{tr}(\mathbf M_j(\mu))<0$ for all $\mu>0$. In this case, instability occurs if and only if
$\det(\mathbf M_j(\mu))<0$. Expanding the determinant yields the quadratic form
\begin{equation}
\det(\mathbf M_j(\mu))
=
\det(\mathbf J_j)
-\mu\,B_j
+\mu^2 C_R C_F,
\label{eq:app:di-det-quad}
\end{equation}
where $B_j\coloneqq a_j C_F+e_j C_R$.
Define the discriminant
\[
\Delta_j\coloneqq B_j^2-4\,\det(\mathbf J_j)\,C_R C_F.
\]
By Corollary~\ref{cor:turing-2x2-band}, an unstable band exists for coordinate $j$ if and only if
\begin{equation}
\det(\mathbf J_j)>0,\qquad
B_j>0,\qquad
\Delta_j>0,
\label{eq:app:di-band-conds}
\end{equation}
in which case $\det(\mathbf M_j(\mu))<0$ holds exactly for $\mu\in(\mu_{-,j},\mu_{+,j})$, where
\begin{equation}
\mu_{\pm,j}=\frac{B_j\pm\sqrt{\Delta_j}}{2C_R C_F}.
\label{eq:app:di-mu-pm}
\end{equation}

Since normalized diffusion backbones have bounded spectra (e.g., $\mu\in[0,2]$), we optionally add a lightweight, graph-independent penalty on the reaction--diffusion parameters to encourage stable local dynamics and the presence of an unstable window.

\subsection{SDGL-NC details: coexistence equilibrium and purely imaginary local spectrum}
\label{app:nc-imag-spectrum}

We derive the coexistence equilibrium and the local Jacobian spectrum of the predator--prey reaction used in SDGL-NC.

\subsubsection{Predator--prey reaction (coordinate-wise).}
Ignoring diffusion, SDGL-NC uses the coordinate-wise reaction
\begin{align}
\frac{\mathrm d \mathbf r_i}{\mathrm dt}
&=
\boldsymbol\alpha\odot\mathbf r_i
-\boldsymbol\beta\odot\mathbf r_i\odot\mathbf f_i,
\label{eq:app:nc-r}\\
\frac{\mathrm d \mathbf f_i}{\mathrm dt}
&=
\boldsymbol\delta\odot\mathbf r_i\odot\mathbf f_i
-\boldsymbol\gamma\odot\mathbf f_i,
\label{eq:app:nc-f}
\end{align}
with learnable parameters
$\boldsymbol\alpha,\boldsymbol\beta,\boldsymbol\delta,\boldsymbol\gamma\in\mathbb R^d_{>0}$.

\textbf{Coexistence equilibrium.}
For each coordinate $j$, the nontrivial equilibrium satisfies
\[
0=r^\star_j(\alpha_j-\beta_j f^\star_j),\qquad
0=f^\star_j(\delta_j r^\star_j-\gamma_j),
\]
hence
\begin{equation}
r^\star_j=\frac{\gamma_j}{\delta_j},
\qquad
f^\star_j=\frac{\alpha_j}{\beta_j}.
\label{eq:app:nc-eq}
\end{equation}

\textbf{Jacobian and purely imaginary eigenvalues.}
At $(r^\star_j,f^\star_j)$, the $2\times2$ Jacobian block is
\begin{equation}
\mathbf J_j
=
\begin{bmatrix}
\frac{\partial \dot r_j}{\partial r_j} & \frac{\partial \dot r_j}{\partial f_j}\\[2pt]
\frac{\partial \dot f_j}{\partial r_j} & \frac{\partial \dot f_j}{\partial f_j}
\end{bmatrix}_{(r^\star_j,f^\star_j)}.
\label{eq:app:nc-Jj-def}
\end{equation}
Evaluating the derivatives at the coexistence equilibrium gives
\begin{equation}
\mathbf J_j
=
\begin{bmatrix}
0 & -\beta_j r^\star_j\\
\delta_j f^\star_j & 0
\end{bmatrix}
=
\begin{bmatrix}
0 & -\beta_j\gamma_j/\delta_j\\
\delta_j\alpha_j/\beta_j & 0
\end{bmatrix}.
\label{eq:app:nc-Jj}
\end{equation}
Its eigenvalues satisfy $\lambda^2+\alpha_j\gamma_j=0$, hence
\begin{equation}
\lambda_{j,\pm}=\pm i\sqrt{\alpha_j\gamma_j}.
\label{eq:app:nc-imag}
\end{equation}
Therefore, the local linearization has zero real part at the coexistence equilibrium, providing a near-critical starting point that diffusion then shifts mode-wise via $\mathbf J-\mu\mathbf C$.
\subsection{IMEX--Euler as an alternative discretization}
\label{app:imex_derivation}

Our framework is not tied to explicit Euler.
To illustrate that alternative time-stepping schemes can be plugged in seamlessly,
we derive an IMEX--Euler discretization for the SDGL-NC dynamics, treating the nonlinear reaction explicitly
and the diffusion term implicitly.

\textbf{Continuous-time SDGL-NC (matrix form).}
Consider the two-field ODEs
\begin{align}
\frac{\mathrm d\mathbf R}{\mathrm dt}
&= \boldsymbol\alpha\odot\mathbf R - \boldsymbol\beta\odot\mathbf R\odot\mathbf F
\;-\; C_R\,\widetilde{\mathbf L}\,\mathbf R,
\label{eq:imex:ode-r}\\
\frac{\mathrm d\mathbf F}{\mathrm dt}
&= \boldsymbol\delta\odot\mathbf R\odot\mathbf F - \boldsymbol\gamma\odot\mathbf F
\;-\; C_F\,\widetilde{\mathbf L}\,\mathbf F,
\label{eq:imex:ode-f}
\end{align}
where $\widetilde{\mathbf L}=\mathbf I-\widetilde{\mathbf S}$ and $\mathbf R(t),\mathbf F(t)\in\mathbb R^{n\times d}$
stack node states row-wise. Discretize time as $t_\ell=\ell\Delta t$ with $\Delta t>0$.

\textbf{IMEX--Euler update.}
Applying IMEX--Euler to Equation~\eqref{eq:imex:ode-r} gives
\begin{equation}
\mathbf R^{\ell+1}
= \mathbf R^\ell
+ \Delta t\Bigl(\boldsymbol\alpha\odot\mathbf R^\ell - \boldsymbol\beta\odot\mathbf R^\ell\odot\mathbf F^\ell\Bigr)
- \Delta t\,C_R\,\widetilde{\mathbf L}\,\mathbf R^{\ell+1},
\end{equation}
hence
\begin{equation}
\bigl(\mathbf I+\Delta t\,C_R\,\widetilde{\mathbf L}\bigr)\mathbf R^{\ell+1}
=
\mathbf R^\ell
+ \Delta t\Bigl(\boldsymbol\alpha\odot\mathbf R^\ell - \boldsymbol\beta\odot\mathbf R^\ell\odot\mathbf F^\ell\Bigr).
\label{eq:imex-r}
\end{equation}
Similarly, applying IMEX--Euler to \eqref{eq:imex:ode-f} yields
\begin{equation}
\bigl(\mathbf I+\Delta t\,C_F\,\widetilde{\mathbf L}\bigr)\mathbf F^{\ell+1}
=
\mathbf F^\ell
+ \Delta t\Bigl(\boldsymbol\delta\odot\mathbf R^\ell\odot\mathbf F^\ell - \boldsymbol\gamma\odot\mathbf F^\ell\Bigr).
\label{eq:imex-f}
\end{equation}
For the normalized Laplacian-type operators used in this work, $\widetilde{\mathbf L}$ has nonnegative spectrum,
so $\mathbf I+\Delta t\,C_\cdot\,\widetilde{\mathbf L}$ is invertible for $\Delta t>0$ and $C_\cdot>0$.

\section{Additional Experimental Details}
\label{app:exp-details}

\subsection{Dataset statistics}
\label{app:dataset-stats}

We evaluate SDGL on two tasks.
\textbf{(i) Transductive node classification.}
We use widely adopted single-graph benchmarks, including homophilic citation and co-purchase/co-authorship networks
(Cora, CiteSeer, PubMed, Amazon-Computers, Amazon-Photo, Coauthor-CS, Coauthor-Physics)
and heterophilic web/actor graphs (Actor, Chameleon, Squirrel).
For these datasets, we follow the standard public splits and protocols used in prior work.

\textbf{(ii) Graph classification via propagation structure.}
We use FakeNewsNet with its two standard subsets, \emph{PolitiFact} and \emph{GossipCop}.
Each instance corresponds to one news article paired with a diffusion graph derived from its social context,
including related Twitter posts and user engagements/interactions.
Nodes represent users/posts (depending on preprocessing), edges encode observed interaction links,
and the label indicates whether the article is \texttt{Real} or \texttt{Fake} based on expert verification.
We preprocess the data following~\cite{dou2021user} and use a 75\%/25\% train--test split.

\subsection{Hyperparameter Disclosure}
\label{sec:hyperparameter_disclosure}

Table~\ref{tab:sdgl_hparams_all} reports the hyperparameters used for SDGL-DI and SDGL-NC on all datasets.
For both variants, the model maintains two interacting fields with width $d$, so the concatenated representation has width $2d$.
Importantly, standard message-passing GNNs such as GCN are typically used with only $2$--$4$ layers in practice, whereas SDGL remains stable and effective at substantially larger depths, as reflected by the tuned layer numbers $L$ in Table~\ref{tab:sdgl_hparams_all}.

\begin{table*}[ht]
\centering
\caption{Hyperparameter disclosure for SDGL across datasets.
$L$ is the number of layers, $d$ is the hidden width per field (so the output width is $2d$), $p$ is the dropout rate, and $\mathrm{lr}$ is the learning rate.}
\label{tab:sdgl_hparams_all}
\setlength{\tabcolsep}{4pt}
\renewcommand{\arraystretch}{1.15}
\begin{tabular}{lcccccccc}
\toprule
& \multicolumn{4}{c}{\textbf{SDGL-DI}} & \multicolumn{4}{c}{\textbf{SDGL-NC}} \\
\cmidrule(lr){2-5}\cmidrule(lr){6-9}
\textbf{Dataset} & Number of layers $L$ & $d$ & Dropout $p$ & $\mathrm{lr}$ & Number of layers $L$ & $d$ & Dropout $p$ & $\mathrm{lr}$ \\
\midrule

\multicolumn{9}{l}{\textbf{Node classification (homophilic)}} \\
\midrule
Cora            & 15 & 128 & 0.39 & 0.0031 & 15 & 128 & 0.39 & 0.0094 \\
CiteSeer        & 10 & 512 & 0.40 & 0.0026 & 8 & 64 & 0.40 & 0.0020 \\
PubMed          & 10 & 256 & 0.36 & 0.0044 & 10 & 128 & 0.39 & 0.0031 \\
Coauthor-CS      & 8 & 128 & 0.52 & 0.0020 & 8 & 128 & 0.40 & 0.0020 \\
Computers       & 8 & 128 & 0.44 & 0.0059 & 8 & 128 & 0.39 & 0.0306 \\
Photo           & 8 & 128 & 0.52 & 0.0020 & 10 & 128 & 0.32 & 0.0020 \\

\midrule
\multicolumn{9}{l}{\textbf{Node classification (heterophilic)}} \\
\midrule
Actor           & 8 & 128 &  0.40 & 0.0061 & 15 & 128 & 0.25 & 0.0020 \\
Squirrel        & 10 & 128 & 0.40 & 0.0061 & 10 & 128 & 0.25 & 0.0020 \\
Chameleon       & 8 & 128 & 0.52 & 0.0306 & 8 & 128 & 0.39 & 0.0306 \\

\midrule
\multicolumn{9}{l}{\textbf{Graph classification (FakeNewsNet)}} \\
\midrule
PolitiFact      & 15 & 128 & 0.19 & 0.0031 & 10 & 128 & 0.09 & 0.0031 \\
GossipCop       & 15 & 128 & 0.01 & 0.0030 & 10 & 128 & 0.09 & 0.0031 \\

\bottomrule
\end{tabular}%
\vskip -0.1in
\end{table*}

\end{document}